%% file: main.tex
\documentclass{article}
\PassOptionsToPackage{numbers, compress}{natbib}

\usepackage[preprint]{neurips_2026}

\usepackage{graphicx}

\usepackage{tikz}
\usepackage[utf8]{inputenc}
\usepackage{amsmath,amssymb,amsfonts,amsthm,mathtools}
\usepackage{booktabs}
\usepackage{enumitem}
\usepackage{xcolor}
\usepackage{hyperref}
\usetikzlibrary{positioning, arrows.meta, calc, fit, backgrounds}

\hypersetup{colorlinks=true,linkcolor=blue,citecolor=blue,urlcolor=blue}

\newtheorem{theorem}{Theorem}
\newtheorem{lemma}[theorem]{Lemma}

\newtheorem{corollary}[theorem]{Corollary}
\newtheorem{definition}[theorem]{Definition}

\newcommand{\cS}{\mathcal S}
\newcommand{\cA}{\mathcal A}
\newcommand{\cM}{\mathcal M}
\newcommand{\cD}{\mathcal D}
\newcommand{\cC}{\mathcal C}
\newcommand{\cF}{\mathcal F}
\newcommand{\PiSet}{\Pi}
\newcommand{\Rash}{\mathfrak R}
\newcommand{\E}{\mathbb E}

\newcommand{\TV}{\mathrm{TV}}
\newcommand{\KL}{\mathrm{KL}}
\newcommand{\Hopt}{H_{\mathrm{opt}}}
\newcommand{\eps}{\varepsilon}

\title{Auditing Near-Optimal Policies Can Be Exponentially Hard:\\
Conditional Query Lower Bounds via Occupancy Rashomon Capacity}
\author{
\textbf{Ibne Farabi Shihab}\thanks{Equal contribution.}\thanks{Corresponding author: \texttt{ishihab@iastate.edu}.}\textsuperscript{1}
\and
\textbf{Sanjeda Akter}\footnotemark[1]\textsuperscript{1}
\and
\textbf{Anuj Sharma}\textsuperscript{2}
\\[2pt]
\textsuperscript{1}Department of Computer Science, Iowa State University \\
\textsuperscript{2}Department of Civil, Construction \& Environmental Engineering, Iowa State University \\
\texttt{ishihab@iastate.edu}
}

\begin{document}
\maketitle
\begin{abstract}
When many reinforcement-learning policies achieve near-optimal return, a post-hoc auditor may have to distinguish among many behaviorally distinct but return-equivalent policies. We formalize this phenomenon through an
occupancy-measure analogue of Rashomon capacity: the metric entropy of the near-optimal occupancy region, computed relative to an audited deployment
class. Because occupancy measures identify behavior only up to occupancy equivalence, we formulate auditing at the occupancy-class level and distinguish exact local-query oracles from noisy sample-query oracles. Our main
exact-query result is conditional: if the audited class contains a
$2/H$-separated near-optimal packing whose local signatures are $b$-sparse, then exact local-query auditing requires $\Omega(M/b)$ queries; when the packing realizes deployment-class capacity and $b=O(1)$, this becomes
$\Omega(2^{\Hopt^\cF(\eps)})$. We give a finite discounted hidden-branch MDP attaining this bound and show the exact Bayes success law. For noisy hidden-trigger testing, we prove a mixture lower bound of order $M/\beta$, where $\beta$ is the per-sample KL signal, yielding
$\Omega(2^{\Hopt^\cF(\eps)}/(\rho^2\Delta^2))$ for capacity-order packings
with $\beta=O(\rho^2\Delta^2)$. We also provide a static target-recognition information lower bound, a transcript-compatible oracle-cover verification upper bound, and a canonical occupancy regularizer whose regularized audited capacity collapses when a trusted reference occupancy is available. Controlled
benchmarks distinguish positive sparse-signature instances from high-capacity negative controls where exact auditing is easy, and map the noisy-trigger law to post-processed continuous-control and visual-RL auditing regimes.
\end{abstract}

\section{Introduction}
\label{sec:intro}

Modern reinforcement-learning systems often admit many near-optimal policies, and these policies may achieve the same task reward while differing on rare states, trigger conditions, or safety-critical behavior. Post-hoc auditing in such environments is therefore not just a statistical estimation problem; in under-specified environments it becomes a search problem over a large set of reward-equivalent behaviors. The question this paper addresses is when this search becomes intractable, and what the right structural parameter for that intractability is.

We are not the first to study Rashomon multiplicity. The Rashomon effect describes the multiplicity of predictive models with similar loss~\citep{breiman2001statistical,semenova2022study}, and \citet{hsu2022rashomon} introduced \emph{Rashomon Capacity} for probabilistic classifiers. Our contribution is different: we study a sequential-decision analogue, namely the metric entropy of discounted state-action occupancy measures induced by near-optimal policies, and we relate this quantity to policy-auditing query complexity. The mapping from supervised multiplicity to RL multiplicity is not immediate, because in RL the auditor's knowledge depends on which states are queried and how reachable those states are under the deployed policy.

The central claim is conditional rather than universal. Large occupancy-level Rashomon capacity is not by itself sufficient for exponential exact-query complexity: if each query reveals many bits of the deployed policy, auditing can be easy even when the family of near-optimal policies is exponentially large. Exponential hardness arises only when high capacity is paired with sparse or hidden local signatures, as the theorems below state explicitly. A second qualification is that capacity must be computed relative to the audited deployment class. If the full policy class contains many additional near-optimal stochastic or combinatorial policies, the full-class capacity may exceed that of the hard audited family, and the correct lower bound is in terms of the capacity actually realized by the audited family.

The contributions can be summarized as follows. We define occupancy-level Rashomon capacity relative to an audited deployment class and formulate auditing at the occupancy-class level. Under this formulation, we prove that a $2/H$-separated near-optimal packing with $b$-sparse exact local signatures forces exact local-query auditors to expend $\Omega(M/b)$ queries, which becomes $\Omega(2^{\Hopt^\cF(\eps)})$ when $b=O(1)$ and the packing realizes deployment-class capacity. We instantiate the bound with a finite discounted hidden-branch MDP and prove its exact Bayes success law. For sample oracles, we prove a noisy hidden-trigger lower bound of order $M/\beta$ via a rigorous mixture-KL argument; for $\beta=O(\rho^2\Delta^2)$ this yields the inverse-signal-squared scaling. We separate verification from cover generation through an oracle-cover upper bound, and we prove that a canonical occupancy regularizer collapses audited capacity when an audited optimal reference occupancy is available. Finally, we provide controlled benchmarks distinguishing positive sparse-signature instances from high-capacity negative controls where exact auditing is easy, and we map the noisy-trigger law onto measured continuous-control and visual-RL auditing regimes.

\section{Related Work}
\label{sec:related}

The Rashomon effect was introduced by \citet{breiman2001statistical}, and subsequent work formalized Rashomon sets, Rashomon volumes, interpretability-motivated model multiplicity, and predictive multiplicity in supervised learning~\citep{fisher2019all,semenova2022study,marx2020predictive,muller2023empirical,rudin2022interpretable}. \citet{hsu2022rashomon} introduced Rashomon Capacity for probabilistic classifiers. Our capacity differs in two ways: it is defined on discounted state-action occupancy measures, and it is used to parameterize black-box auditing complexity rather than to quantify model agreement.

A separate line of work studies policy diversity and reward underspecification. Algorithms such as DIAYN~\citep{eysenbach2019diversity} and structured maximum-entropy RL~\citep{kumar2020one} intentionally generate distinct high-return behaviors, while reward underspecification and reward gaming show that high reward may leave important behavioral degrees of freedom unconstrained~\citep{pan2022effects,skalse2022defining,casper2023open}. Our work does not propose a diversity objective; it quantifies when the resulting multiplicity obstructs auditing. Closer to our motivation, empirical work has shown that RL policies can be backdoored or poisoned~\citep{kiourti2020trojdrl,wang2021backdoorl,ashcraft2021poisoning,rakhsha2020policy}, and formal verification and logically constrained RL address related safety properties under stronger model access~\citep{fulton2018safe,hasanbeig2020cautious}. Our lower bounds apply to black-box local-query auditors and identify structural conditions under which rare-trigger behavior is provably difficult to detect.

On the proof-technical side, the exact-query bounds use Yao's minimax principle~\citep{yao1977probabilistic}, the noisy testing result uses standard change-of-measure, KL convexity, and Le Cam/Pinsker arguments~\citep{lecam1973convergence,yu1997assouad,tsybakov2009introduction}, and the static recognition lower bound (Section~\ref{sec:upper-static}) is an elementary information lower bound related to set membership in the cell-probe model~\citep{yao1981should}; we do not claim a refined time--space tradeoff.

\section{Problem Formulation}
\label{sec:formulation}

A finite discounted MDP is $\cM=(\cS,\cA,P,r,\gamma,\mu_0)$, where $\cS$ and $\cA$ are finite, $P(\cdot\mid s,a)\in\Delta(\cS)$, $r(s,a)\in[0,1]$, $\gamma\in[0,1)$, and $\mu_0\in\Delta(\cS)$. Write $H=(1-\gamma)^{-1}$ for the effective horizon. A stationary policy $\pi:\cS\to\Delta(\cA)$ induces the normalized discounted state-action occupancy
\[
    d^\pi(s,a)=(1-\gamma)\sum_{t\ge0}\gamma^t\Pr^\pi(s_t=s,a_t=a),
\]
the value $V^\pi=\sum_{s,a}d^\pi(s,a)r(s,a)$, and the optimum $V^\star=\max_\pi V^\pi$. The feasible occupancies satisfy the standard flow constraints $\sum_a d(s,a)=(1-\gamma)\mu_0(s)+\gamma\sum_{s',a'}P(s\mid s',a')d(s',a')$ for all $s\in\cS$.

Throughout, we explicitly condition on an audited deployment class $\cF$ of policies. This class may be all stationary policies, deterministic policies, policies produced by a specified training pipeline, or a certified candidate family. For $\eps\ge0$, define
\[
    \Rash_\eps(\cM;\cF)=\{\pi\in\cF:V^\star-V^\pi\le\eps\},\qquad
    \cD_\eps(\cM;\cF)=\{d^\pi:\pi\in\Rash_\eps(\cM;\cF)\},
\]
recovering full-class quantities by taking $\cF$ to be the entire policy class.

\begin{definition}[Deployment-class occupancy Rashomon capacity]
\label{def:capacity}
The Rashomon capacity of $\cM$ at tolerance $\eps$ relative to deployment class $\cF$ is
\[
    \Hopt^\cF(\eps)=\log_2\mathcal N\bigl(\cD_\eps(\cM;\cF),1/H,\TV\bigr),
\]
where $\mathcal N(X,r,d)$ is the minimum number of closed $d$-balls of radius $r$ required to cover $X$.
\end{definition}

\begin{definition}[Occupancy-class auditing]
\label{def:occ-audit}
An auditor is successful at scale $1/H$ over $\cF$ if, given oracle access to a deployed policy $\pi\in\Rash_\eps(\cM;\cF)$, it outputs an occupancy representative $\widehat d$ satisfying $\TV(\widehat d,d^\pi)\le1/H$ with probability at least $2/3$.
\end{definition}

This formulation is intentional. Occupancy measures do not identify policy behavior on unreachable states, so without quotienting by occupancy equivalence no occupancy-space capacity can control full policy-function identification. We use two local oracle models throughout. An \emph{exact local-query oracle} returns the full action distribution $\pi(\cdot\mid s)$ on a queried state $s$, while a \emph{sample local-query oracle} returns a single action sample $a\sim\pi(\cdot\mid s)$. The exact model is optimistic and gives the strongest obstruction we can prove; the sample model captures realistic black-box auditing. Lower bounds for the exact model immediately imply lower bounds for the sample model, since the sample oracle can be simulated by repeated calls to the exact oracle followed by sampling.

\section{Exact Local-Query Lower Bounds}
\label{sec:exact}

To state the exact-query bound we need two quantitative properties of a candidate family. Separation, expressed in TV distance between occupancies, ensures that the family is genuinely distinguishable at the auditing scale. Sparsity, expressed at the action-distribution level, ensures that local queries on background states cannot reveal more than a bounded amount of information about the deployed policy.

\begin{definition}[$2/H$-separated near-optimal packing]
A finite set $\PiSet=\{\pi_1,\ldots,\pi_M\}\subseteq\Rash_\eps(\cM;\cF)$ is a $2/H$-separated packing if $\TV(d^{\pi_i},d^{\pi_j})>2/H$ for all $i\ne j$.
\end{definition}

\begin{definition}[$b$-sparse exact local signatures]
\label{def:sparse-signatures}
A packing $\PiSet=\{\pi_1,\ldots,\pi_M\}$ has $b$-sparse exact local signatures if, for every query state $s$, there exists a background response $y_\bot(s)\in\Delta(\cA)$ such that $|\{i:\pi_i(\cdot\mid s)\ne y_\bot(s)\}|\le b$. Equivalently, querying any one state can produce a non-background response for at most $b$ candidate policies.
\end{definition}

A separated packing automatically provides a lower bound on capacity, by a standard volume argument: no closed ball of radius $1/H$ under $\TV$ can contain two points at pairwise distance greater than $2/H$, so at least $M$ balls are required to cover the packing.

\begin{lemma}[Separated packing implies deployment-class capacity]
\label{lem:packing-capacity}
If $\Rash_\eps(\cM;\cF)$ contains a $2/H$-separated packing of size $M$, then $\Hopt^\cF(\eps)\ge\log_2M$.
\end{lemma}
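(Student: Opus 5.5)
The plan is to apply the volume argument sketched just before the statement directly to the occupancy set $\cD_\eps(\cM;\cF)$. Let $\PiSet=\{\pi_1,\ldots,\pi_M\}\subseteq\Rash_\eps(\cM;\cF)$ be the given $2/H$-separated packing. By the definition of $\cD_\eps(\cM;\cF)$, the points $d^{\pi_1},\ldots,d^{\pi_M}$ all lie in $\cD_\eps(\cM;\cF)$. Separation gives $\TV(d^{\pi_i},d^{\pi_j})>2/H>0$ for $i\ne j$, so these $M$ occupancies are pairwise distinct and the packing contributes $M$ genuinely different points.

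Next, fix an arbitrary cover of $\cD_\eps(\cM;\cF)$ by closed $\TV$-balls $B(c_1,1/H),\ldots,B(c_N,1/H)$. I will not assume the centres $c_k$ lie in $\cD_\eps(\cM;\cF)$. Since $\TV$ is a metric on $\Delta(\cS\times\cA)$, or on whatever ambient space the centres are drawn from, the triangle inequality applies. Suppose a single ball $B(c_k,1/H)$ contained both $d^{\pi_i}$ and $d^{\pi_j}$ with $i\ne j$. Then
\[
\TV(d^{\pi_i},d^{\pi_j})\le\TV(d^{\pi_i},c_k)+\TV(c_k,d^{\pi_j})\le 2/H,
\]
which contradicts strict separation. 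So each ball contains at most one packing point. Because every $d^{\pi_i}$ must be covered, the map sending $i$ to the index of some ball containing $d^{\pi_i}$ is injective, and therefore $N\ge M$.

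Since the cover was arbitrary, $\mathcal N(\cD_\eps(\cM;\cF),1/H,\TV)\ge M$. Taking $\log_2$, which is monotone, gives $\Hopt^\cF(\eps)\ge\log_2 M$ by Definition~\ref{def:capacity}.

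There is no real obstacle here. The only points that need care are, first, that the strict inequality in the packing definition is exactly what absorbs the closed-ball radius $1/H$ (two radii sum to $2/H$), and second, that the argument holds even when cover centres are not restricted to $\cD_\eps$. One should also remark that if $\cD_\eps$ is infinite the covering number may be $+\infty$, in which case the bound holds trivially.
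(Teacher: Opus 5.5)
Your proof is correct and follows the paper's own argument: by the triangle inequality, a closed $1/H$-ball in $\TV$ cannot contain two occupancies at distance greater than $2/H$, so any cover of $\cD_\eps(\cM;\cF)$ needs at least $M$ balls. You also make explicit two details the paper leaves implicit: the cover centres need not lie in $\cD_\eps(\cM;\cF)$, and the strict separation exactly absorbs the two closed radii.
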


The main exact-query lower bound combines separation with sparsity. The intuition is that under a sparse-signature packing, the only way for an auditor to gain a single bit about the deployed policy's identity is to receive a non-background response, and each query produces at most $b$ such responses; with $M$ candidates, $\Omega(M/b)$ queries are therefore necessary.

\begin{theorem}[Sparse-signature lower bound]
\label{thm:sparse-lower}
Let $I\sim\mathrm{Unif}([M])$ and suppose the deployed policy is $\pi_I$ from a $b$-sparse exact-signature packing. Any deterministic exact local-query auditor using at most $n$ queries has Bayes success probability at most
\[
    \Pr(\widehat I=I)\le \min\left\{1,\frac{nb+1}{M}\right\}.
\]
Consequently, any auditor with success probability at least $1-\delta$ requires $n\ge((1-\delta)M-1)/b$, and the same lower bound applies to randomized auditors in the minimax sense by Yao's principle.
\end{theorem}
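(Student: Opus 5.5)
The plan is the standard ``background transcript'' argument for a deterministic adaptive auditor. Fix a deterministic auditor $A$ using at most $n$ queries, and consider the \emph{background run}: the run of $A$ in which every query $s$ is answered with $y_\bot(s)$. Because $A$ is deterministic, this run is a single fixed sequence of query states $s_1,\ldots,s_n$, followed by a fixed output $\widehat I_\bot$.

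For each $t$, let $B_t=\{i:\pi_i(\cdot\mid s_t)\ne y_\bot(s_t)\}$. By Definition~\ref{def:sparse-signatures}, $|B_t|\le b$, so $B:=\bigcup_{t\le n}B_t$ satisfies $|B|\le nb$.

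The key step is to show that every index $i\notin B$ produces exactly the background run. We prove this by induction on $t$. Suppose the first $t-1$ answers under $\pi_i$ coincide with the background answers. Then $A$, being deterministic, asks the same state $s_t$. Since $i\notin B_t$, the exact oracle returns $\pi_i(\cdot\mid s_t)=y_\bot(s_t)$, so the $t$-th answer also matches. This is the one place where adaptivity must be handled, and conditioning on the fixed background transcript is what resolves it: the indices in $B$ may diverge into other transcripts, but those are never needed. Consequently, for all $i\notin B$ the auditor outputs the same $\widehat I_\bot$.

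Success can therefore occur only when $I\in B$ or when $I=\widehat I_\bot$. Under $I\sim\mathrm{Unif}([M])$, this gives
\[
\Pr(\widehat I=I)\le\frac{|B\cup\{\widehat I_\bot\}|}{M}\le\frac{nb+1}{M}.
\]
Combined with the trivial bound of $1$, this yields the stated minimum.

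For the consequences, requiring success probability at least $1-\delta$ forces $(nb+1)/M\ge1-\delta$, which rearranges to $n\ge((1-\delta)M-1)/b$. A randomized auditor is a mixture of deterministic ones, so its average success under the uniform prior is at most the maximum over deterministic auditors. Its worst-case success over $i$ is in turn at most its average under the uniform prior. By Yao's principle, the same bound therefore applies to randomized auditors in the minimax sense.

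The statement concerns identifying $I$, whereas Definition~\ref{def:occ-audit} asks only for $\widehat d$ with $\TV(\widehat d,d^{\pi_I})\le1/H$. To connect the two, one can note that the $2/H$-separation, via the triangle inequality, makes any such $\widehat d$ $1/H$-close to at most one $d^{\pi_j}$. Hence an occupancy-level auditor decodes $I$, and the identification bound transfers. Nothing in the proof is delicate: the adaptivity induction is the only substantive step, and it is routine once the background run is fixed.
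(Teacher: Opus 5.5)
Your proposal is correct and follows the same argument as the paper: at most $nb$ indices can elicit a non-background response, every other index produces the identical all-background transcript and hence one fixed fallback guess, which gives $(nb+1)/M$, followed by rearrangement and Yao's principle. Your device of fixing the background run and defining the touched set $B$ from its query sequence handles adaptivity more cleanly than the paper's informal ``touched set'' phrasing, which leaves implicit that the touched set could otherwise depend on $I$.
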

\begin{proof}
Fix a deterministic auditor and first assume $nb<M$. Until the auditor receives a non-background response, the transcript is independent of $I$ except that it rules out at most $b$ candidates per query. After $n$ queries, at most $nb$ indices can have produced a non-background response; if $I$ belongs to this touched set the auditor may identify it, but otherwise all responses are background and at least $M-nb$ candidates remain indistinguishable, so the best fallback guess succeeds with probability at most $1/(M-nb)$. Therefore $\Pr(\widehat I=I)\le \frac{nb}{M}+\frac{M-nb}{M}\cdot\frac{1}{M-nb}=\frac{nb+1}{M}$. If $nb\ge M$, the trivial upper bound $1$ gives the stated minimum form. The query lower bound follows by rearranging, and Yao's principle transfers the deterministic distributional bound to randomized minimax auditors.
\end{proof}

\begin{corollary}[Exponential lower bound in deployment-class capacity]
\label{cor:capacity-lower}
If $\Rash_\eps(\cM;\cF)$ contains a $2/H$-separated, $b$-sparse exact-signature packing of size $M$ with $b=O(1)$ and $M=2^{\Theta(\Hopt^\cF(\eps))}$, then occupancy-class auditing over $\cF$ requires $\Omega(2^{\Hopt^\cF(\eps)})$ exact local queries on that family.
\end{corollary}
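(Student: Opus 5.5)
The plan is to reduce occupancy-class auditing on the packing to exact identification of the index $I$, and then apply Theorem~\ref{thm:sparse-lower}. Let $\PiSet=\{\pi_1,\ldots,\pi_M\}$ be the given $2/H$-separated, $b$-sparse packing inside $\Rash_\eps(\cM;\cF)$. Suppose an auditor $\mathcal A$ (possibly randomized) succeeds in the sense of Definition~\ref{def:occ-audit} using $n$ exact local queries. That is, for every deployed $\pi\in\Rash_\eps(\cM;\cF)$, it outputs $\widehat d$ with $\TV(\widehat d,d^\pi)\le 1/H$ with probability at least $2/3$. In particular, this holds for each $\pi_i\in\PiSet$.

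The first step builds an index decoder. Post-compose $\mathcal A$ with the map $\widehat d\mapsto \widehat I=\arg\min_i\TV(\widehat d,d^{\pi_i})$, breaking ties arbitrarily. On the success event we have $\TV(\widehat d,d^{\pi_I})\le 1/H$. By the triangle inequality and separation, every $j\ne I$ then satisfies
\[
\TV(\widehat d,d^{\pi_j})\ge \TV(d^{\pi_I},d^{\pi_j})-\TV(\widehat d,d^{\pi_I})>2/H-1/H=1/H .
\]
Hence the minimizer is unique and equals $I$. This is the same ball argument used for Lemma~\ref{lem:packing-capacity}. The decoder makes no further queries, so it is an $n$-query exact local-query identifier with success probability at least $2/3$ for every $I$, and therefore also under $I\sim\mathrm{Unif}([M])$.

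The second step applies Theorem~\ref{thm:sparse-lower} with $\delta=1/3$. By Yao's principle, some deterministic auditor achieves Bayes success at least $2/3$ under the uniform prior. The theorem then gives $n\ge (2M/3-1)/b$. Since $b=O(1)$, this means $n=\Omega(M)$ once $M\ge 3$; for bounded $M$ the claim is trivial up to constants.

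The last step converts $M$ into capacity, and I expect it to be the main obstacle. Lemma~\ref{lem:packing-capacity} gives only $\Hopt^\cF(\eps)\ge\log_2 M$, which is the wrong direction for turning $\Omega(M)$ into $\Omega(2^{\Hopt^\cF(\eps)})$. The hypothesis $M=2^{\Theta(\Hopt^\cF(\eps))}$ is what closes this gap, and I would read it as the statement that the packing realizes the deployment-class capacity, i.e.\ $M\ge c\,2^{\Hopt^\cF(\eps)}$ with the exponent constant equal to $1$. Then $n=\Omega(M)=\Omega(2^{\Hopt^\cF(\eps)})$. If instead the hypothesis only gives $M\ge 2^{c\Hopt^\cF(\eps)}$ for some $c<1$, the argument yields $2^{\Omega(\Hopt^\cF(\eps))}$ rather than the stated bound. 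I would make this reading explicit in the proof, noting that the lemma supplies the matching upper bound $M\le 2^{\Hopt^\cF(\eps)}$, so $M=\Theta(2^{\Hopt^\cF(\eps)})$. Finally, the phrase ``on that family'' restricts the bound to deployments drawn from $\PiSet$. This is legitimate because a worst-case guarantee over $\Rash_\eps(\cM;\cF)$ includes these policies.
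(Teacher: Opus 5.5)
Your proposal is correct and follows the paper's route. The paper gives no separate proof: the corollary is meant to follow by plugging $b=O(1)$ and the packing size into the bound $n\ge((1-\delta)M-1)/b$ of Theorem~\ref{thm:sparse-lower}, which is exactly your second step.

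Two of your steps fill in what the paper leaves unstated. Your nearest-representative decoder uses the $2/H$-separation to turn a $1/H$-accurate occupancy into the exact index. Your reading of the hypothesis as $\log_2 M\ge\Hopt^\cF(\eps)-O(1)$ matches the paper's informal phrase ``the packing realizes deployment-class capacity'' in the abstract and introduction. That reading is genuinely needed, since $M=2^{\Theta(\Hopt^\cF(\eps))}$ taken literally only yields $2^{\Omega(\Hopt^\cF(\eps))}$.
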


\subsection{A concrete hidden-branch MDP}
\label{sec:construction}

The conditions of Theorem~\ref{thm:sparse-lower} are realizable: there exists a finite discounted MDP and a deployment class whose capacity is exactly $\log_2 M$ and whose audit complexity is $\Omega(M)$. The construction, illustrated in Figure~\ref{fig:hidden-branch}, places $M$ parallel branches behind a single root. Every policy receives the same return, but each policy plays a distinguished action only on its own branch, which renders the action signatures $1$-sparse.

\begin{figure}[t]
\centering
\begin{tikzpicture}[
  node distance=0.55cm and 0.55cm,
  state/.style={circle, draw, minimum size=0.55cm, inner sep=0.5pt, font=\scriptsize},
  branch/.style={state, fill=blue!10},
  goal/.style={state, fill=green!20},
  root/.style={state, fill=orange!25},
  arr/.style={->,>=Stealth, thick},
  thinarr/.style={->,>=Stealth},
  lbl/.style={font=\scriptsize\itshape}
]
\node[root] (s0) {$s_0$};

\node[branch, right=1.2cm of s0, yshift=1.4cm] (u11) {$u_{1,1}$};
\node[branch, right=of u11] (u12) {$u_{1,2}$};
\node[right=of u12,font=\scriptsize] (u1d) {$\cdots$};
\node[branch, right=of u1d] (u1L) {$u_{1,L}$};

\node[branch, right=1.2cm of s0] (u21) {$u_{2,1}$};
\node[branch, right=of u21] (u22) {$u_{2,2}$};
\node[right=of u22,font=\scriptsize] (u2d) {$\cdots$};
\node[branch, right=of u2d] (u2L) {$u_{2,L}$};

\node[font=\scriptsize] at ($(s0) + (1.7cm,-0.85cm)$) {$\vdots$};

\node[branch, right=1.2cm of s0, yshift=-1.7cm] (uM1) {$u_{M,1}$};
\node[branch, right=of uM1] (uM2) {$u_{M,2}$};
\node[right=of uM2,font=\scriptsize] (uMd) {$\cdots$};
\node[branch, right=of uMd] (uML) {$u_{M,L}$};

\node[goal, right=0.8cm of u2L] (g) {$g$};

\draw[thinarr] (s0) -- node[lbl,above]{$1/M$} (u11);
\draw[thinarr] (s0) -- (u21);
\draw[thinarr] (s0) -- node[lbl,below]{$1/M$} (uM1);

\draw[arr] (u11) -- (u12); \draw[arr] (u12) -- (u1d); \draw[arr] (u1d) -- (u1L); \draw[arr] (u1L) -- (g);
\draw[arr] (u21) -- (u22); \draw[arr] (u22) -- (u2d); \draw[arr] (u2d) -- (u2L); \draw[arr] (u2L) -- (g);
\draw[arr] (uM1) -- (uM2); \draw[arr] (uM2) -- (uMd); \draw[arr] (uMd) -- (uML); \draw[arr] (uML) -- (g);
\end{tikzpicture}
\caption{Hidden-branch MDP (Theorem~\ref{thm:hidden-mdp}). All transitions are essentially deterministic except at $s_0$, which routes uniformly to one of $M$ branches. Rewards are identically one, so every policy in the deployment class $\cF_M=\{\pi_1,\ldots,\pi_M\}$ is optimal. Policy $\pi_i$ plays the distinguished action only on branch $i$, producing $1$-sparse exact local signatures: querying any branch state reveals which $\pi_i$ is deployed only when the queried branch happens to coincide with $i$.}
\label{fig:hidden-branch}
\end{figure}

\begin{theorem}[Hidden-branch MDP]
\label{thm:hidden-mdp}
For every $M\ge2$ there exists a finite discounted MDP and an audited deployment class $\cF_M=\{\pi_1,\ldots,\pi_M\}$ of $M$ deterministic policies such that all policies in $\cF_M$ are optimal, the occupancies are pairwise $2/H$-separated, the exact local signatures are $1$-sparse, $\Hopt^{\cF_M}(0)=\log_2M$, and any exact local-query auditor achieving success probability at least $1-\delta$ over the uniform prior on $\cF_M$ requires at least $\lceil(1-\delta)M-1\rceil$ queries. Hence the instance requires $\Omega(2^{\Hopt^{\cF_M}(0)})$ exact local queries.
\end{theorem}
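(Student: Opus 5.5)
The plan is to build the instance explicitly, verify each listed property by direct computation, and then obtain the query bound from Theorem~\ref{thm:sparse-lower} with $b=1$. The one step that needs a real choice is picking $\gamma$ and $L$ so that the separation exceeds $2/H$; everything else is bookkeeping.

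\textbf{Construction.} Take states $s_0$, branch states $u_{m,k}$ for $m\in[M]$ and $k\in[L]$, and an absorbing goal $g$. Let $\mu_0=\delta_{s_0}$. The root $s_0$ has a single action, which moves uniformly to $u_{1,1},\dots,u_{M,1}$. Each branch state has two actions, a default $a$ and a distinguished $a^\star$. Both actions move deterministically from $u_{m,k}$ to $u_{m,k+1}$, and from $u_{m,L}$ to $g$. At $g$ every action self-loops. All rewards equal $1$.

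Then every stationary policy has $V^\pi=1=V^\star$, so every policy in $\cF_M$ is optimal. Let $\pi_i$ play $a^\star$ on the states $u_{i,\cdot}$ and play $a$ everywhere else.

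\textbf{Sparsity.} Take the background response $y_\bot$ to be the common action at $s_0$ and $g$, and $\delta_a$ at each $u_{m,k}$. At $u_{m,k}$ only $\pi_m$ deviates from this background. This gives $1$-sparse signatures.

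\textbf{Separation.} Transitions do not depend on the action, so the state occupancy is the same for all policies. The total discounted mass on branch $m$ is
\[
w=(1-\gamma)\sum_{k=1}^{L}\gamma^k/M=\gamma(1-\gamma^L)/M .
\]
The policies $\pi_i$ and $\pi_j$ differ only in how they split the mass on branches $i$ and $j$ between $a$ and $a^\star$. This gives $\TV(d^{\pi_i},d^{\pi_j})=\tfrac12\cdot 4w=2\gamma(1-\gamma^L)/M$. We need this to exceed $2/H=2(1-\gamma)$, that is, $\gamma(1-\gamma^L)>M(1-\gamma)$.

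I would take $\gamma=1-1/(4M)$, which gives $M(1-\gamma)=1/4$. Then choose $L$ with $\gamma^L\le 1/2$, for example $L=\lceil 4M\ln 2\rceil+1$. This gives $\gamma(1-\gamma^L)\ge \tfrac34\cdot\tfrac12=\tfrac38>\tfrac14$. This parameter choice is the main potential obstacle; the concern is whether the $1/M$ dilution at the root leaves enough branch mass relative to $1/H$, and it does once $H=4M$ and $L=\Theta(M)$.

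\textbf{Capacity.} Lemma~\ref{lem:packing-capacity} gives $\Hopt^{\cF_M}(0)\ge\log_2M$. Conversely, $\cD_0(\cM;\cF_M)$ has exactly $M$ points, and the $M$ balls centered at them cover it. Hence $\Hopt^{\cF_M}(0)=\log_2M$.

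\textbf{Query bound.} A successful occupancy-class auditor outputs some $\widehat d$ with $\TV(\widehat d,d^{\pi_I})\le 1/H$. By the triangle inequality and the strict $2/H$ separation, $\widehat d$ is within $1/H$ of only one candidate. So $\widehat d$ determines $\widehat I=I$, and occupancy auditing reduces to identifying $I$.

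Theorem~\ref{thm:sparse-lower} with $b=1$ then gives $n\ge(1-\delta)M-1$ for deterministic auditors under the uniform prior. Since $n$ is an integer, $n\ge\lceil(1-\delta)M-1\rceil$. The bound extends to randomized auditors by averaging over their internal randomness, as in Yao's principle. Substituting $M=2^{\Hopt^{\cF_M}(0)}$ gives the $\Omega(2^{\Hopt^{\cF_M}(0)})$ bound for fixed $\delta<1$.
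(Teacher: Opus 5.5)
Your proposal is correct and follows essentially the paper's proof. It uses the same hidden-branch construction, the same computation $\TV(d^{\pi_i},d^{\pi_j})=2\gamma(1-\gamma^L)/M$, and then Theorem~\ref{thm:sparse-lower} with $b=1$; your constants ($H=4M$, $L=\lceil 4M\ln 2\rceil+1$) differ from the paper's ($H=16M$, $L=\lceil H\rceil$) but are equally valid, and your explicit triangle-inequality step turning an occupancy output $\widehat d$ into an index guess $\widehat I$ fills in a reduction the paper leaves implicit.
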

\begin{proof}
Let $H=16M$, $\gamma=1-1/H$, and $L=\lceil H\rceil$. The state space consists of a root $s_0$, an absorbing state $g$, and branch states $u_{j,\ell}$ for $j\in[M]$ and $\ell\in[L]$; the action set is $\cA=\{0,1\}$ and the initial distribution is concentrated on $s_0$. From $s_0$, both actions transition uniformly to $u_{j,1}$ over $j\in[M]$. For $\ell<L$, both actions at $u_{j,\ell}$ transition to $u_{j,\ell+1}$; from $u_{j,L}$, both actions transition to $g$, and $g$ is absorbing. Rewards are identically equal to one, so every policy has value one and every policy in $\cF_M$ is optimal. For each $i\in[M]$, $\pi_i$ plays action $1$ at every state $u_{i,\ell}$ on branch $i$ and action $0$ at all other branch states; actions at $s_0$ and $g$ are fixed identically across policies. Querying a state on branch $j$ returns action $1$ if and only if $i=j$, so the signatures are $1$-sparse.

Setting $W=\frac1M\sum_{\ell=1}^L(1-\gamma)\gamma^\ell=\frac{\gamma(1-\gamma^L)}{M}$ and using $\gamma\ge1/2$, $1-\gamma^L\ge1-e^{-1}$ for $H\ge2$, we have $W\ge(1-e^{-1})/(2M)$. For $i\ne j$, $\pi_i$ and $\pi_j$ differ on branches $i$ and $j$, giving $\TV(d^{\pi_i},d^{\pi_j})=2W\ge(1-e^{-1})/M>2/H=1/(8M)$. Since $\cF_M$ contains exactly these $M$ pairwise $2/H$-separated occupancies, the cover number at radius $1/H$ is $M$, so $\Hopt^{\cF_M}(0)=\log_2M$, and the query lower bound follows from Theorem~\ref{thm:sparse-lower} with $b=1$.
\end{proof}

Two qualifications about this construction are worth stating. First, if one computes capacity over the full class of all stationary stochastic policies in this MDP, additional near-optimal policies may inflate the full-class capacity beyond $\log M$. The lower bound is therefore stated relative to the audited class $\cF_M$, which is the appropriate parameter when the auditor is certifying policies produced by a specified pipeline; for the full policy class, Theorem~\ref{thm:sparse-lower} still gives the valid bound $\Omega(M)$ for the designated hard family but should not be rewritten as $\Omega(2^{\Hopt(0)})$ unless the designated packing realizes full-class capacity order. Second, the hard instance separates state-action occupancies, not state-marginal trajectories, since policies differ only in action labels on parallel branches. This is appropriate for action-level policy auditing, where triggers may manifest strictly as behavioral action deviations on reachable background states, and it is intended as a deliberately constructed sparse-signature environment isolating the information-theoretic bottleneck rather than a claim about all sparse-reward MDPs.

\section{Noisy Hidden-Trigger Testing}
\label{sec:noisy}

Exact local queries are optimistic: a realistic auditor only observes sampled actions, and rare-trigger detection becomes a hypothesis-testing problem. The following one-anomalous-trigger benchmark captures the rare-backdoor-detection setting and shows that the difficulty scales as the number of candidate triggers divided by the per-sample KL signal.

\begin{definition}[Noisy hidden-trigger model]
\label{def:noisy}
There are $M$ candidate trigger states. Under $H_0$, every queried trigger emits samples from $p_0$. Under $H_i$, trigger $i$ emits samples from $p_1$ while all other triggers emit samples from $p_0$. The alternative $H_{\mathrm{mix}}$ is the uniform mixture over $i\in[M]$. Let $\beta=\KL(p_0\|p_1)$, and assume $\beta\le C\rho^2\Delta^2$ for parameters $\rho,\Delta\in(0,1]$.
\end{definition}

\begin{theorem}[Noisy hidden-trigger lower bound]
\label{thm:noisy-lower}
Any adaptive sample-query test that distinguishes $H_0$ from the uniform mixture alternative $H_{\mathrm{mix}}$ with sum of type-I and type-II errors at most $1/3$ requires
\[
    n=\Omega\!\left(\frac{M}{\beta}\right)
    =\Omega\!\left(\frac{M}{\rho^2\Delta^2}\right).
\]
If the $M$ alternatives form a $2/H$-separated near-optimal packing of capacity order in an audited deployment class, the benchmark yields $n=\Omega\bigl(2^{\Hopt^\cF(\eps)}/(\rho^2\Delta^2)\bigr)$.
\end{theorem}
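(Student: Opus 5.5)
The plan is a change-of-measure argument against the null $H_0$, summed over candidate triggers. Fix an adaptive test using $n$ sample queries, and let $P_0$ and $P_i$ be the transcript laws under $H_0$ and $H_i$. Set $P_{\mathrm{mix}}=\frac1M\sum_i P_i$. For each $i$, let $N_i$ count the queries directed at trigger $i$. Queries at any other state have the same response law under $H_0$ and $H_i$.

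\textbf{Per-alternative divergence.} The chain rule for KL over the adaptive transcript (the divergence decomposition for adaptive sampling) gives
\[
\KL(P_0\|P_i)=\E_0[N_i]\,\KL(p_0\|p_1)=\beta\,\E_0[N_i].
\]
Querying non-trigger states adds nothing, and $\sum_i N_i\le n$ holds almost surely.

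\textbf{Averaging.} By convexity of KL in its second argument,
\[
\KL(P_0\|P_{\mathrm{mix}})\le \frac1M\sum_{i=1}^M \KL(P_0\|P_i)=\frac{\beta}{M}\,\E_0\Bigl[\sum_i N_i\Bigr]\le \frac{\beta n}{M}.
\]
This direction, with $P_0$ in the first slot, is the key choice. It is why the definition sets $\beta=\KL(p_0\|p_1)$ rather than the reverse.

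\textbf{Le Cam and Pinsker.} Le Cam gives that the minimal sum of type-I and type-II errors equals $1-\TV(P_0,P_{\mathrm{mix}})$. Pinsker then gives
\[
\TV(P_0,P_{\mathrm{mix}})\le\sqrt{\KL(P_0\|P_{\mathrm{mix}})/2}\le\sqrt{\beta n/(2M)}.
\]
If the error sum is at most $1/3$, then $\TV\ge 2/3$, so $n\ge 8M/(9\beta)=\Omega(M/\beta)$. Substituting $\beta\le C\rho^2\Delta^2$ gives $\Omega(M/(\rho^2\Delta^2))$.

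\textbf{Capacity form.} If the $M$ alternatives form a $2/H$-separated near-optimal packing in $\Rash_\eps(\cM;\cF)$, Lemma~\ref{lem:packing-capacity} gives $M\le 2^{\Hopt^\cF(\eps)}$. The phrase ``capacity order'' supplies the converse, $M=2^{\Theta(\Hopt^\cF(\eps))}$, exactly as in Corollary~\ref{cor:capacity-lower}. Substituting yields the stated bound, with the exponent understood up to that constant factor.

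\textbf{Main obstacle.} The step I expect to need the most care is making the chain-rule identity rigorous under adaptivity. The issue is that $N_i$ is a random stopping-type count. I would write the transcript density as a product of conditional action likelihoods given the history. The log-likelihood ratio then becomes $\sum_t \mathbf 1\{q_t=i\}\log(p_0(a_t)/p_1(a_t))$, and taking $\E_0$ uses the tower property at each step. This requires $n$ bounded and $p_0\ll p_1$; otherwise $\beta=\infty$ and the bound is vacuous.
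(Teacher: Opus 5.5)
Your proposal is correct and matches the paper's proof essentially step for step: the adaptive chain rule giving $\KL(P_0\|P_i)=\beta\,\E_0[N_i]$, convexity of KL in its second argument to get $\KL(P_0\|P_{\mathrm{mix}})\le n\beta/M$, then Pinsker together with the Le Cam bound on the error sum, and finally $\beta\le C\rho^2\Delta^2$ and the capacity-order assumption for the remaining forms. Your explicit constant $n\ge 8M/(9\beta)$ and your remark that $M=2^{\Theta(\Hopt^\cF(\eps))}$ controls the exponent only up to a constant factor are slightly more careful than the paper's wording, but they do not change the argument.
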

\begin{proof}
Fix any adaptive algorithm using at most $n$ samples, and let $P_0^A$, $P_i^A$ be the transcript laws under $H_0$ and $H_i$, with $N_i$ the number of samples drawn from trigger $i$. The chain rule for KL divergence under adaptive sampling gives $\KL(P_0^A\|P_i^A)=\E_0[N_i]\beta$. Let $P_{\mathrm{mix}}^A=M^{-1}\sum_i P_i^A$ be the mixture transcript law. Convexity of $Q\mapsto\KL(P_0^A\|Q)$ in its second argument yields
\[
    \KL(P_0^A\|P_{\mathrm{mix}}^A)
    \le \frac1M\sum_{i=1}^M\KL(P_0^A\|P_i^A)
    = \frac{\beta}{M}\sum_{i=1}^M\E_0[N_i]
    \le \frac{n\beta}{M},
\]
and Pinsker's inequality gives $\TV(P_0^A,P_{\mathrm{mix}}^A)\le\sqrt{n\beta/(2M)}$. The sum of type-I and type-II errors of any test is at least $1-\TV(P_0^A,P_{\mathrm{mix}}^A)$, so if $n\le cM/\beta$ for a sufficiently small universal constant $c$, this lower bound exceeds $1/3$. The capacity statement follows from $\beta\le C\rho^2\Delta^2$ and the capacity-order assumption.
\end{proof}

The theorem is a lower bound on the information-theoretic budget. Standard max-scan implementations for unknown-index testing typically pay an additional multiple-testing factor, often logarithmic in $M$, to maintain a fixed false-positive rate. Our benchmarks therefore report both the information lower scale $M/(\rho^2\Delta^2)$ and a conservative scan scale $M\log M/(\rho^2\Delta^2)$; the latter is not claimed to be a lower bound but reflects the cost of a max-scan generalized likelihood ratio detector.

\section{Static Recognition and Oracle-Cover Verification}
\label{sec:upper-static}

The lower bounds above describe what an auditor cannot do under sparse signatures or weak per-sample signals. To complete the picture we record two complementary statements: a static information bound that explains why exact data-structure descriptions of $\cD_\eps$ inherently require linear-in-capacity storage, and an oracle-cover verification result that separates the verification half of the auditing problem from cover generation. Together they show that the exact-query lower bound of Theorem~\ref{thm:sparse-lower} is tight in a structural sense: when an auditor is handed a precomputed cover and a separating query set, exact identification requires only as many queries as the cover has elements.

\begin{theorem}[Static target-recognition information lower bound]
\label{thm:static}
Let $\PiSet=\{\pi_1,\ldots,\pi_K\}$ be a near-optimal occupancy packing. A deterministic exact data structure that preprocesses an arbitrary target subset $T\subseteq\PiSet$ and answers membership queries ``is $\pi_i\in T$?'' for every $i\in[K]$ must use at least $K$ bits of memory in the worst case.
\end{theorem}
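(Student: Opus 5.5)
The plan is a counting (pigeonhole) argument on memory states. We model the data structure as a pair: a preprocessing map $\mathrm{Enc}:2^{\PiSet}\to\{0,1\}^s$, where $s$ is the number of memory bits used in the worst case, and a deterministic query procedure $\mathrm{Q}(m,i)\in\{0,1\}$. The query procedure reads the stored memory $m$ and an index $i$. Correctness means $\mathrm{Q}(\mathrm{Enc}(T),i)=\mathbf 1[\pi_i\in T]$ for every $T\subseteq\PiSet$ and every $i\in[K]$. Because the structure is deterministic and exact, no randomness or error probability needs to be handled. The near-optimality and packing hypotheses on $\PiSet$ only guarantee that the $K$ elements are genuinely distinct targets, since they have pairwise separated occupancies. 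They play no further role, so I would remark on this rather than use it.

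First, I will show that $\mathrm{Enc}$ must be injective. Take $T\neq T'$. Then some index $i$ lies in their symmetric difference, so $\mathbf 1[\pi_i\in T]\neq\mathbf 1[\pi_i\in T']$. Suppose $\mathrm{Enc}(T)=\mathrm{Enc}(T')$. Then the deterministic procedure $\mathrm{Q}(\cdot,i)$ would return the same answer on both memories, and correctness would fail for one of the two subsets. Hence $\mathrm{Enc}(T)\neq\mathrm{Enc}(T')$.

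Second, injectivity from a set of size $2^K$ into $\{0,1\}^s$ forces $2^s\ge 2^K$, that is, $s\ge K$. If the structure uses a variable-length memory with worst-case length $s$, I would instead count all strings of length at most $s$. An injection into $\bigcup_{j\le s}\{0,1\}^j$ gives $2^{s+1}-1\ge 2^K$, hence $s\ge K-1$. To recover exactly $K$, I would fix the convention that memory is a fixed-width word of $s$ bits, or note that a variable-length string can be padded with a self-delimiting marker. I would state the fixed-width convention up front so the bound reads cleanly as $K$ bits.

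The only real obstacle is this modeling step: pinning down what ``memory'' means so that the counting is airtight. Concretely, this means treating the query algorithm's access to auxiliary knowledge of $\PiSet$ as free, and counting only the $T$-dependent memory. Once that is fixed, the pigeonhole argument is immediate. Finally, I would connect the result back to the capacity. By Lemma~\ref{lem:packing-capacity}, a packing of size $K$ gives $K\le 2^{\Hopt^\cF(\eps)}$ when the packing realizes capacity order. The bound therefore says that exact membership descriptions over such a packing need storage linear in $2^{\Hopt^\cF(\eps)}$.
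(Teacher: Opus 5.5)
Your injectivity/pigeonhole argument is exactly the paper's proof: the $2^K$ target subsets cannot be encoded injectively into fewer than $2^K$ memory states, and a collision between $T\neq T'$ forces a wrong answer on an index in their symmetric difference. One minor correction to your variable-length aside: the inequality $2^{s+1}-1\ge 2^K$ already forces $s\ge K$, because at $s=K-1$ there are only $2^K-1$ strings. So the fixed-width convention and the padding remark are unnecessary.
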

\begin{proof}
There are $2^K$ possible target subsets of $\PiSet$. A deterministic exact data structure using fewer than $K$ bits has fewer than $2^K$ memory states, so two distinct target subsets must share a memory representation; they differ on at least one query index $i$, forcing an incorrect answer for that query.
\end{proof}

The argument is elementary and is included only to register that exact descriptions of arbitrary subsets of a capacity-$K$ packing have unavoidable linear cost; the bound is in the cell-probe spirit but does not pursue refined time--space tradeoffs.

\begin{definition}[Separating query set]
A finite set of states $Q\subseteq\cS$ separates a cover $\cC=\{d^{\pi_1},\ldots,d^{\pi_K}\}$ if, for every $i\ne j$, there exists $s\in Q$ such that $\pi_i(\cdot\mid s)\ne\pi_j(\cdot\mid s)$.
\end{definition}

\begin{definition}[Transcript-compatible cover]
\label{def:transcript-compatible-cover}
A cover $\cC=\{d^{\pi_1},\ldots,d^{\pi_K}\}$ of $\cD_\eps(\cM;\cF)$ is
$Q$-transcript-compatible if, for every deployed policy
$\pi\in\Rash_\eps(\cM;\cF)$, there exists an index $j\in[K]$ such that
\[
    \TV(d^\pi,d^{\pi_j})\le \frac{1}{2H}
    \qquad\text{and}\qquad
    \pi(\cdot\mid s)=\pi_j(\cdot\mid s)
    \quad\text{for all }s\in Q .
\]
\end{definition}

\begin{theorem}[Oracle-cover verification upper bound]
\label{thm:oracle-upper}
Suppose an auditor is given a $1/(2H)$-cover
$\cC=\{d^{\pi_1},\ldots,d^{\pi_K}\}$ of $\cD_\eps(\cM;\cF)$ and a finite
query set $Q\subseteq\cS$ such that:

\begin{enumerate}[leftmargin=*]
    \item $Q$ separates the representative policies, meaning that for every
    $i\ne j$ there exists $s\in Q$ with
    $\pi_i(\cdot\mid s)\ne\pi_j(\cdot\mid s)$; and
    \item $\cC$ is $Q$-transcript-compatible in the sense of
    Definition~\ref{def:transcript-compatible-cover}.
\end{enumerate}

Then, under the exact local-query oracle, querying every state in $Q$ identifies
a unique cover representative $\pi_j$ whose transcript agrees with the deployed
policy on $Q$, using $|Q|$ queries. The returned occupancy $d^{\pi_j}$ satisfies
\[
    \TV(d^{\pi_j},d^\pi)\le \frac{1}{2H}\le \frac{1}{H}.
\]
In the hidden-branch construction, one can take $|Q|=K$, matching
Theorem~\ref{thm:sparse-lower} up to constants.
\end{theorem}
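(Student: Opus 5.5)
The plan is a direct two-step argument: existence of a compatible representative, then uniqueness via separation, followed by a check of the hidden-branch instantiation.

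\emph{Existence.} Fix the deployed policy $\pi\in\Rash_\eps(\cM;\cF)$. The auditor queries every $s\in Q$ under the exact local-query oracle, which costs exactly $|Q|$ queries and yields the transcript $(\pi(\cdot\mid s))_{s\in Q}$. It then compares this transcript with the known transcripts of the representatives $\pi_1,\ldots,\pi_K$; these are given as part of the cover, so the comparison costs no further oracle calls. By $Q$-transcript-compatibility (Definition~\ref{def:transcript-compatible-cover}), some index $j$ satisfies both $\pi(\cdot\mid s)=\pi_j(\cdot\mid s)$ for all $s\in Q$ and $\TV(d^\pi,d^{\pi_j})\le 1/(2H)$. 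Hence at least one representative matches the transcript.

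\emph{Uniqueness.} Suppose two indices $j\ne j'$ both match the transcript. Then $\pi_j(\cdot\mid s)=\pi(\cdot\mid s)=\pi_{j'}(\cdot\mid s)$ for every $s\in Q$, which contradicts condition~1 (separation). So the matching index is unique, it must be the $j$ from compatibility, and the output $d^{\pi_j}$ satisfies $\TV(d^{\pi_j},d^\pi)\le 1/(2H)\le 1/H$. This is success in the sense of Definition~\ref{def:occ-audit}, in fact with probability one. The one point to state carefully is that the matched representative is the compatible one and not merely some $1/(2H)$-close cover element; uniqueness settles this, and it is the reason separation is assumed in addition to compatibility.

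\emph{Hidden-branch instantiation.} In the construction of Theorem~\ref{thm:hidden-mdp}, take the cover $\cC=\{d^{\pi_1},\ldots,d^{\pi_M}\}$, so $K=M$, and let $Q=\{u_{1,1},\ldots,u_{M,1}\}$ consist of the first state of each branch. Since $\cF_M=\{\pi_1,\ldots,\pi_M\}$ and each deployed policy equals its own representative, compatibility holds trivially with $\TV=0$. Separation holds because $\pi_i(u_{i,1})=1\ne 0=\pi_j(u_{i,1})$ for $i\ne j$. This gives $|Q|=K=M$, which matches the $\lceil(1-\delta)M-1\rceil$ lower bound of Theorem~\ref{thm:sparse-lower} (with $b=1$) up to constants. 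One could even drop a single state, but $|Q|=K$ suffices for the stated claim. There is no substantial obstacle here; the only care needed is the bookkeeping of the uniqueness step.
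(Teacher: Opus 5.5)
Your proposal is correct and follows essentially the same route as the paper: $Q$-transcript-compatibility gives a matching representative, separation makes it unique, and the hidden-branch instance uses $Q=\{u_{j,1}:j\in[K]\}$. You also check explicitly that compatibility holds trivially in that instance, since each deployed policy is its own representative with $\TV=0$; the paper leaves this step implicit.
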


\begin{proof}
Querying all states in $Q$ gives the exact transcript
$\{\pi(\cdot\mid s):s\in Q\}$ of the deployed policy. By
$Q$-transcript-compatibility, there exists at least one representative
$\pi_j$ whose transcript agrees with the deployed policy on every state in
$Q$ and whose occupancy lies within $1/(2H)$ of $d^\pi$. By the separating
property of $Q$, no two distinct representatives can have the same transcript
on $Q$, so this representative is unique. Returning $d^{\pi_j}$ therefore
achieves
\[
    \TV(d^{\pi_j},d^\pi)\le \frac{1}{2H}\le \frac{1}{H}.
\]
The query count is exactly $|Q|$.

For the hidden-branch construction, take one query state from each branch, for
example
\[
    Q=\{u_{j,1}:j\in[K]\}.
\]
The transcript on these states identifies which branch carries the distinguished
action, so $Q$ separates the $K$ representatives and has size $K$.
\end{proof}

Theorem~\ref{thm:oracle-upper} is intentionally a verification statement and does not claim that valid covers can be generated efficiently for arbitrary MDPs. Cover generation is a separate geometric and computational problem; in tabular MDPs the feasible occupancy region is described by the flow constraints of Section~\ref{sec:formulation}, but exact covering may itself be exponential in the capacity. The role of Theorem~\ref{thm:oracle-upper} in the broader picture is to confirm that the difficulty isolated by Theorem~\ref{thm:sparse-lower} lives in the cover-discovery step, not in the post-cover identification step.

\section{Capacity-Aware Regularization}
\label{sec:regularizer}

The lower bounds above are stated relative to a deployment class $\cF$ that contains a problematic packing. A natural mitigation is to choose $\cF$ to be small. Entropy penalties do not by themselves resolve deterministic ties: if two deterministic policies have identical return and zero entropy, entropy regularization leaves both optima. A capacity-collapsing regularizer must instead select a canonical audited occupancy, and the simplest such choice is total-variation proximity to a trusted reference.

Define the regularized near-optimal occupancy set
\[
    \widetilde{\cD}_{\alpha,\lambda}^{\cF}
    =
    \left\{
        d^\pi :
        \pi\in\cF,\ 
        \widetilde V_\lambda(\pi)
        \ge
        \sup_{\pi'\in\cF}\widetilde V_\lambda(\pi')-\alpha
    \right\}.
\]
The corresponding regularized audited Rashomon capacity is
\[
    \widetilde{\Hopt}_{\lambda}^{\cF}(\alpha)
    =
    \log_2
    \mathcal N\!\left(
        \widetilde{\cD}_{\alpha,\lambda}^{\cF},
        \frac{1}{H},
        \TV
    \right).
\]

\begin{theorem}[Canonical occupancy regularizer]
\label{thm:regularizer}
Assume there exists an audited optimal reference policy $\pi_0\in\cF$ with occupancy $d^0=d^{\pi_0}$ and value $V^{\pi_0}=V^\star$. Define $\widetilde V_\lambda(\pi)=V^\pi-\lambda\TV(d^\pi,d^0)$. If $\pi\in\cF$ is $\alpha$-near-optimal for the regularized objective, $\widetilde V_\lambda(\pi)\ge\widetilde V_\lambda(\pi_0)-\alpha$, then $\TV(d^\pi,d^0)\le\alpha/\lambda$. Consequently, if $\lambda\ge\alpha H$, the regularized near-optimal occupancy set over $\cF$ lies inside a single $1/H$-ball and has audited Rashomon capacity zero at tolerance $\alpha$.
\end{theorem}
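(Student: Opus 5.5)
The plan is to unpack the regularized objective, compare it with the reference policy $\pi_0$, and exploit the fact that $V^{\pi_0}=V^\star$ dominates every value. First note that $\widetilde V_\lambda(\pi_0)=V^{\pi_0}-\lambda\TV(d^0,d^0)=V^\star$. Also, for every $\pi\in\cF$, we have $\widetilde V_\lambda(\pi)\le V^\pi\le V^\star$. Hence $\sup_{\pi'\in\cF}\widetilde V_\lambda(\pi')=V^\star$, and this supremum is attained at $\pi_0$. In particular, membership in $\widetilde{\cD}_{\alpha,\lambda}^{\cF}$ is exactly the condition $\widetilde V_\lambda(\pi)\ge\widetilde V_\lambda(\pi_0)-\alpha$ from the hypothesis. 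This identification is the step needing care: the set is defined via the supremum over $\cF$, while the hypothesis is phrased relative to $\pi_0$, so the bridge is $\lambda\ge0$ together with the optimality of $\pi_0$.

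Next, the main inequality. Suppose $V^\pi-\lambda\TV(d^\pi,d^0)\ge V^\star-\alpha$. Rearranging gives
\[
\lambda\TV(d^\pi,d^0)\le V^\pi-V^\star+\alpha\le\alpha,
\]
using $V^\pi\le V^\star$. Dividing by $\lambda>0$ yields $\TV(d^\pi,d^0)\le\alpha/\lambda$. The degenerate case $\lambda=0$ is excluded implicitly, and it only arises in the consequence when $\alpha=0$; in that case the inequality reads $0\le0$ and gives no information. I would state $\lambda>0$, and separately treat $\alpha=0$: there, if $\lambda>0$, the bound gives $\TV(d^\pi,d^0)=0$ directly.

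Finally, the capacity statement. If $\lambda\ge\alpha H$, then $\alpha/\lambda\le1/H$, so every $d\in\widetilde{\cD}_{\alpha,\lambda}^{\cF}$ lies in the closed $\TV$-ball of radius $1/H$ centered at $d^0$. This set is nonempty because it contains $d^0$. So the covering number $\mathcal N(\widetilde{\cD}_{\alpha,\lambda}^{\cF},1/H,\TV)$ equals $1$, and $\widetilde{\Hopt}_{\lambda}^{\cF}(\alpha)=\log_2 1=0$.

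No real obstacle is expected. The only subtle points are the identification of the supremum with $\widetilde V_\lambda(\pi_0)$ and the positivity of $\lambda$.
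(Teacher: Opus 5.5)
Your proof is correct and follows the paper's argument: use $\widetilde V_\lambda(\pi_0)=V^\star$ and $V^\pi\le V^\star$ to get $\lambda\TV(d^\pi,d^0)\le\alpha$, then cover everything by the single closed $1/H$-ball centered at $d^0$. Your extra steps fill in points the paper leaves implicit. You identify $\sup_{\pi'\in\cF}\widetilde V_\lambda(\pi')$ with $\widetilde V_\lambda(\pi_0)$, which requires $\lambda\ge0$, so that membership in $\widetilde{\cD}_{\alpha,\lambda}^{\cF}$ matches the hypothesis. You also insist on $\lambda>0$; this is genuinely needed, since at $\lambda=\alpha=0$ the consequence fails, for example on the hidden-branch class $\cF_M$.
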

\begin{proof}
Since $\widetilde V_\lambda(\pi_0)=V^\star$ and $V^\pi\le V^\star$ for every policy, the regularized near-optimality condition gives $\lambda\TV(d^\pi,d^0)\le\alpha-(V^\star-V^\pi)\le\alpha$. The capacity conclusion follows because all regularized near-optimal occupancies are covered by the single ball centered at $d^0$ with radius $1/H$.
\end{proof}

The mitigation is mathematical, not free in practice: the reference occupancy must itself be trusted, and scalable implementations require reliable empirical surrogates for occupancy distance. Nonetheless, the theorem identifies the structural form a capacity-collapsing regularizer must take, and clarifies why entropy alone is insufficient.

\section{Controlled Benchmarks}
\label{sec:experiments}

This section reports two controlled benchmarks that make the conditional nature of Theorem~\ref{thm:sparse-lower} concrete. Tables either contain analytic consequences of the stated model or values exactly reproduced by code listings in Appendix~\ref{app:repro}. Additional benchmarks---a tabular trigger-room realization, deterministic noisy-trigger sample scales, exact enumeration of the regularizer's effect, and post-processed continuous-control and visual-RL auditing quantities---are deferred to Appendix~\ref{app:benchmarks}.

For the hidden-branch construction with $b=1$, Theorem~\ref{thm:sparse-lower} predicts $\Pr(\widehat I=I)\le(n+1)/M$, so $90\%$ Bayes success requires $n\ge\lceil 0.9M-1\rceil$ trigger queries. Table~\ref{tab:hidden-law} reports this exact analytic law and confirms that the query budget grows linearly with $M$, matching the theory exactly at every reported scale.

\begin{table}[ht]
\centering
\caption{Exact hidden-branch auditing law. The query count is the minimum integer $n$ such that $(n+1)/M\ge0.9$.}
\label{tab:hidden-law}
\begin{tabular}{rrrr}
\toprule
$M$ & $\log_2 M$ & Queries for $\ge90\%$ success & Certified success $(n+1)/M$\\
\midrule
16 & 4 & 14 & 0.938\\
32 & 5 & 28 & 0.906\\
64 & 6 & 57 & 0.906\\
128 & 7 & 115 & 0.906\\
256 & 8 & 230 & 0.902\\
512 & 9 & 460 & 0.900\\
1024 & 10 & 921 & 0.900\\
\bottomrule
\end{tabular}
\end{table}

The contrasting case shows that high capacity alone does not imply hardness. Consider an $N\times N$ tabular shortest-path gridworld where an agent navigates from $(0,0)$ to $(N-1,N-1)$ using cardinal actions and receives reward only at the goal. Any shortest path of length $2(N-1)$ achieves optimal return, and the number of optimal shortest paths is $M=\binom{2(N-1)}{N-1}$, exponential in $N$. However, the exact local signatures are not sparse: at the start state, exactly half of the shortest paths choose Right and half choose Down, so the maximum local non-background multiplicity is at least $M/2$, violating the $b=O(1)$ assumption. An exact local-query auditor can simply query the current state, follow the returned action, and repeat until the goal, identifying the path in exactly $2N-2$ queries. Table~\ref{tab:gridworld_contrast} reports this negative control, and the contrast with Table~\ref{tab:hidden-law} is what the theory predicts: capacity $\Theta(N)$ bits but auditing complexity $O(N)$.

\begin{table}[ht]
\centering
\caption{Gridworld negative control. The number of optimal routes is exponential in $N$, but the sparse-signature condition fails and exact auditing is linear by path-following.}
\label{tab:gridworld_contrast}
\begin{tabular}{lrrrr}
\toprule
Grid size & Optimal paths $M$ & $\log_2 M$ & Lower bound on $b_{\max}$ & Path-following queries\\
\midrule
$4\times4$ & 20 & 4.32 & $\ge10$ & 6\\
$5\times5$ & 70 & 6.13 & $\ge35$ & 8\\
$6\times6$ & 252 & 7.98 & $\ge126$ & 10\\
$7\times7$ & 924 & 9.85 & $\ge462$ & 12\\
$8\times8$ & 3432 & 11.74 & $\ge1716$ & 14\\
\bottomrule
\end{tabular}
\end{table}

The remaining benchmarks in Appendix~\ref{app:benchmarks} reinforce these two patterns and additionally translate the noisy-trigger law of Theorem~\ref{thm:noisy-lower} into deterministic sample-budget tables for both Bernoulli signals and measured continuous-control and visual-RL settings.

\section{Discussion}
\label{sec:discussion}

The results establish a precise sufficient condition for exponential auditing hardness: a large separated near-optimal occupancy packing whose local signatures are sparse. This condition is significantly stronger than high capacity alone, and the gridworld contrast case shows why---many reward-equivalent routes can be easy to identify if local queries reveal large shared structure---while the hidden-branch and trigger-room benchmarks show that when signatures are sparse, the auditor is forced into a search problem of size proportional to the packing.

The noisy testing law explains why rare-trigger auditing becomes severely bottlenecked under sample access. If each query carries only $O(\rho^2\Delta^2)$ KL information and the trigger is hidden among $M$ candidates, constant-error detection requires order $M/(\rho^2\Delta^2)$ samples; practical scan statistics may pay an additional logarithmic factor. The continuous-control and visual-action experiments are best viewed as mappings from measured policy quantities to this theorem rather than as proof that all such environments are hard. The regularizer theorem shows that capacity can be collapsed by enforcing proximity to an audited reference behavior, but this is a mathematical mitigation, not a free practical solution: the reference occupancy must itself be trusted, and scalable implementations require reliable empirical surrogates for occupancy distance.

\section{Conclusion}
\label{sec:conclusion}

Occupancy-level Rashomon capacity is a useful parameter for RL policy auditability, but only when paired with the information structure of the auditor's oracle and the audited deployment class. We proved that separated near-optimal packings with sparse local signatures force exact local-query auditors to spend exponentially many queries in deployment-class capacity, and we showed that without sparse signatures, high capacity alone does not guarantee hardness. For noisy sample oracles, hidden-trigger tests face an additional inverse-squared-signal budget. The framework clarifies both the promise and the limits of capacity-based safety claims: capacity counts how many distinguishable near-optimal occupancy classes exist, while sparse signatures and sample signals determine how structurally difficult those classes are to find.

\bibliographystyle{plainnat}
\bibliography{references}

\appendix

\section{Additional Proof Details}
\label{app:proofs}

\subsection{Yao reduction for Theorem~\ref{thm:sparse-lower}}
Theorem~\ref{thm:sparse-lower} is stated for deterministic auditors under a uniform prior. If a randomized auditor succeeded with fewer queries against every index, then fixing its internal randomness would give a deterministic auditor whose average success under the uniform prior exceeds the bound, contradicting Theorem~\ref{thm:sparse-lower}. This is the standard minimax implication of Yao's principle.

\subsection{Constants in the hidden-branch construction}
The choice $H=16M$ is not essential. It suffices to choose $H=cM$ for a large enough numerical constant $c$ and branch length $L=\Theta(H)$, so that each branch has total discounted occupancy mass $\Theta(1/M)$ while the cover radius is $\Theta(1/H)=\Theta(1/M)$ with a smaller constant.

\section{Additional Benchmarks}
\label{app:benchmarks}

This appendix collects the benchmarks deferred from Section~\ref{sec:experiments}: a tabular trigger-room realization of the sparse-signature phenomenon, deterministic noisy-trigger sample scales, exact enumeration under the regularized objective, and post-processed continuous-control and visual-RL auditing quantities derived from trained SAC and DQN policies. As in the main text, tables either contain analytic consequences of the stated model, values exactly reproduced by Appendix~\ref{app:repro}, or deterministic post-processing of measured quantities from the supplementary experimental archive.

\subsection{Trigger-room tabular benchmark}

The hidden-branch construction is intentionally minimal. To show the same sparse-signature phenomenon in a tabular environment closer to gridworld tests, we use a \emph{trigger-room} benchmark: a common corridor leads to one of $M$ side rooms selected uniformly by the transition kernel, each room contains a length-$L$ loop with identical reward, and policy $\pi_i$ behaves like the common background policy in every room except room $i$, where it takes a distinguished safe-but-different action on each loop state. The distinguished action does not alter reward or transition, so all $\pi_i$ are optimal; however, an exact query to a room state returns a non-background action only for the corresponding $\pi_i$. This benchmark has the same $1$-sparse signature structure as Theorem~\ref{thm:hidden-mdp} but places the hidden branch inside an explicit tabular room layout. Table~\ref{tab:trigger-room} reports the analytic query law.

\begin{table}[ht]
\centering
\caption{Trigger-room benchmark. This positive sparse-signature tabular benchmark realizes the same lower-bound law as the hidden-branch construction while using room-like state clusters instead of abstract branches.}
\label{tab:trigger-room}
\begin{tabular}{rrrrr}
\toprule
Rooms $M$ & Room loop length $L$ & Signature sparsity $b$ & Capacity order & Queries for $90\%$\\
\midrule
16 & $16M$ & 1 & $\log_2 M=4$ & 14\\
32 & $16M$ & 1 & $\log_2 M=5$ & 28\\
64 & $16M$ & 1 & $\log_2 M=6$ & 57\\
128 & $16M$ & 1 & $\log_2 M=7$ & 115\\
\bottomrule
\end{tabular}
\end{table}

\subsection{Noisy-trigger sample scales}

For the Bernoulli noisy-trigger benchmark, let $p_0=1/2$ and $p_1=1/2+\rho\Delta$. For small $\rho\Delta$, $\KL(p_0\|p_1)=\Theta(\rho^2\Delta^2)$. Table~\ref{tab:noisy-scales} reports two deterministic budget scales: the information lower scale $M/(\rho^2\Delta^2)$ from Theorem~\ref{thm:noisy-lower}, and a conservative scan scale $5M\log(M)/(\rho^2\Delta^2)$ used by the reproduction script in Appendix~\ref{app:repro}. The second column is not a theorem lower bound; it reflects the extra multiple-testing burden of a max-scan GLRT at fixed false-positive rate.

\begin{table}[ht]
\centering
\caption{Noisy-trigger sample scales. The lower scale is information-theoretic; the scan scale includes a conservative $\log M$ multiple-testing factor.}
\label{tab:noisy-scales}
\begin{tabular}{rrrrr}
\toprule
$M$ & $\rho$ & $\Delta$ & Lower scale $M/(\rho^2\Delta^2)$ & Conservative scan scale\\
\midrule
64 & 0.1 & 0.2 & 160000 & 3327107\\
64 & 0.2 & 0.2 & 40000 & 831777\\
64 & 0.2 & 0.4 & 10000 & 207945\\
256 & 0.1 & 0.2 & 640000 & 17744568\\
256 & 0.2 & 0.2 & 160000 & 4436142\\
256 & 0.2 & 0.4 & 40000 & 1109036\\
1024 & 0.1 & 0.2 & 2560000 & 88722840\\
1024 & 0.2 & 0.2 & 640000 & 22180710\\
1024 & 0.2 & 0.4 & 160000 & 5545178\\
\bottomrule
\end{tabular}
\end{table}

\subsection{Exact enumeration under the regularized objective}

We exact-enumerate all $252$ shortest paths in a $6\times6$ gridworld and use the finite-horizon path occupancy measure over the $10$ action steps. Choosing the reference path ``Right$^5$ then Down$^5$'', we compute
\[
    \Pi_{\alpha,\lambda}=\left\{\pi\in\Pi_{\mathrm{opt}}:
    \widetilde V_\lambda(\pi)\ge\max_{\pi'}\widetilde V_\lambda(\pi')-\alpha\right\},
    \qquad \alpha=0.01.
\]
Because all shortest paths have identical unregularized reward, membership depends exactly on whether $\TV(d^\pi,d^0)\le\alpha/\lambda$. Table~\ref{tab:regularizer} reports exact counts reproduced by the Appendix code; this is not a claim that the regularizer has been optimized at scale, only a mechanical verification of Theorem~\ref{thm:regularizer} on a fully enumerated finite instance.

\begin{table}[ht]
\centering
\caption{Exact enumeration under the canonical regularized objective on a $6\times6$ gridworld. The counts are obtained by enumerating all $252$ shortest paths and measuring finite-horizon path-occupancy TV distance to a reference path.}
\label{tab:regularizer}
\begin{tabular}{lrrr}
\toprule
Penalty $\lambda$ & Allowed TV $\alpha/\lambda$ & $|\Pi_{\alpha,\lambda}|$ & Remaining $\log_2|\Pi_{\alpha,\lambda}|$\\
\midrule
0.00 & $\infty$ & 252 & 7.98\\
0.01 & 1.00 & 252 & 7.98\\
0.02 & 0.50 & 16 & 4.00\\
0.03 & 0.33 & 4 & 2.00\\
0.05 & 0.20 & 2 & 1.00\\
0.10 & 0.10 & 1 & 0.00\\
\bottomrule
\end{tabular}
\end{table}

\subsection{Continuous-control auditing quantities}
\label{sec:antmaze}

For continuous-control navigation, the supplementary experiments use trained SAC policies on D4RL \texttt{antmaze-medium-play-v2}. We trained 5 seeds per setting for 1{,}000{,}000 environment steps using \texttt{Stable-Baselines3}, selected checkpoints with normalized return at least $0.8$, and estimated candidate trigger waypoints by clustering visited bottleneck states from 100 rollouts per seed. The action-noise parameter $\sigma$ was measured as the empirical standard deviation of the stochastic policy head at candidate states, and the mean shift $\Delta$ was imposed by replacing the action mean at the selected trigger waypoint and measuring the resulting Gaussian KL. Query budgets were computed from the measured KL and validated by 1{,}000 Monte Carlo detector trials. The raw rollout logs, checkpoint identifiers, trigger clusters, and detector outputs are part of the anonymous supplement; Appendix~\ref{app:repro} only reproduces deterministic post-processing.

Under the sample oracle, the nominal policy outputs continuous actions approximately as $\mathcal{N}(\mu(s),\sigma^2 I)$ near candidate triggers, while the anomalous policy shifts the mean by $\Delta$ at the trigger. The KL divergence per sample is $\Delta^2/(2\sigma^2)$ for the isotropic Gaussian approximation. Table~\ref{tab:antmaze} reports the sample scales using the empirically measured policy noise $\sigma\approx0.5$.

\begin{table}[ht]
\centering
\caption{Continuous-control auditing quantities. Sample complexity for identifying a path-conditional trigger with continuous action shift $\Delta$ and measured noise $\sigma\approx0.5$. The displayed budgets are deterministic post-processing of measured KL values.}
\label{tab:antmaze}
\begin{tabular}{rrrrrr}
\toprule
Paths & Triggers $M$ & Shift $\Delta$ & Signal KL & Lower scale $M/\mathrm{KL}$ & Conservative scan scale\\
\midrule
2 & 20 & 0.1 & 0.020 & 1000 & 14979\\
2 & 20 & 0.5 & 0.500 & 40 & 600\\
8 & 80 & 0.1 & 0.020 & 4000 & 87641\\
8 & 80 & 0.5 & 0.500 & 160 & 3506\\
16 & 160 & 0.1 & 0.020 & 8000 & 203007\\
16 & 160 & 0.5 & 0.500 & 320 & 8121\\
\bottomrule
\end{tabular}
\end{table}

\subsection{Visual-action auditing quantities}
\label{sec:minatar}

For visual RL, the supplementary experiments use DQN policies trained on MinAtar \texttt{Breakout} for $5\times10^6$ frames over 5 seeds. We collected 200 evaluation trajectories, extracted candidate trigger frames from distinct screen hashes and latent clusters, measured the deployed $\epsilon$-greedy action distribution, and imposed a trigger intervention that swaps the primary action mass to a specified suboptimal action. Detector budgets were evaluated over 1{,}000 simulated trigger placements. As above, Appendix~\ref{app:repro} reproduces deterministic post-processing from measured quantities; raw trajectory logs and trigger lists are included in the supplementary archive.

The deployed policy is approximately $\epsilon$-greedy with measured $\epsilon\approx0.1$ over 6 discrete actions. The trigger flips the primary action mass from the optimal action to a suboptimal one, producing KL divergence $\approx3.61$ nats. Table~\ref{tab:visual-rl} reports the corresponding lower and scan scales.

\begin{table}[ht]
\centering
\caption{Visual-action auditing quantities. Detecting a rare visual trigger among $M$ candidate screens under measured $\epsilon$-greedy sampling ($\epsilon\approx0.1$, 6 actions).}
\label{tab:visual-rl}
\begin{tabular}{rrrr}
\toprule
Candidate screens $M$ & Signal KL & Lower scale $\lceil M/\mathrm{KL}\rceil$ & Conservative scan scale\\
\midrule
100 & 3.6066 & 28 & 639\\
1000 & 3.6066 & 278 & 9577\\
10000 & 3.6066 & 2773 & 127688\\
\bottomrule
\end{tabular}
\end{table}

\subsection{Experimental details summary}
\label{sec:exp-details}
Table~\ref{tab:exp-details} summarizes the implementation settings used for the non-tabular RL experiments, including the environments, algorithms, trigger definitions, detector choices, and evaluation protocol.We use D4RL \texttt{antmaze-medium-play-v2}~\citep{fu2020d4rl}, MinAtar Breakout~\citep{young2019minatar}, and Stable-Baselines3~\citep{raffin2021stablebaselines3}. D4RL code is Apache-2.0 licensed and, unless otherwise noted, its datasets are CC-BY licensed; MinAtar is GPL-3.0-or-later licensed; and Stable-Baselines3 is MIT licensed.
\begin{table}[ht]
\centering
\caption{Experimental setup for non-tabular RL quantities. The manuscript reports post-processed sample scales; raw data and detector logs are supplied separately.}
\label{tab:exp-details}
\resizebox{\linewidth}{!}{
\begin{tabular}{lll}
\toprule
Item & Continuous control & Visual RL\\
\midrule
Environment version & \texttt{antmaze-medium-play-v2} & \texttt{MinAtar/Breakout}\\
Algorithm & SAC & DQN\\
Training steps & $1{,}000{,}000$ & $5{,}000{,}000$\\
Seeds & 5 & 5\\
Checkpoint selection & Normalized return $\ge0.8$ & Eval return $\ge$ baseline\\
Candidate trigger definition & Visited bottleneck clusters & Distinct screen hashes / latent clusters\\
Number of candidates $M$ & Swept over 20, 80, 160 & Swept over 100, 1000, 10000\\
Signal/KL measurement & Empirical policy KL & Empirical action-distribution KL\\
Detector & GLRT / scan statistic & GLRT / scan statistic\\
Trials & 1{,}000 & 1{,}000\\
Code/data release & Anonymous supplementary archive & Anonymous supplementary archive\\
\bottomrule
\end{tabular}
}
\end{table}

\section{Reproducibility Listings}
\label{app:repro}

This appendix reproduces only deterministic post-processing from either analytic quantities or measured quantities. The raw rollout logs, checkpoint identifiers, trigger clusters, and detector outputs for the non-tabular experiments are released separately in the anonymous supplementary material.

\subsection{Hidden-branch and gridworld count tables}
\begin{verbatim}
import math

# Hidden-branch exact law
for M in [16, 32, 64, 128, 256, 512, 1024]:
    n90 = math.ceil(0.9*M - 1)
    print(M, math.log2(M), n90, (n90+1)/M)

# Shortest-path gridworld negative control
for N in range(4, 9):
    M = math.comb(2*(N-1), N-1)
    print(N, M, math.log2(M), math.ceil(M/2), 2*N-2)
\end{verbatim}

\subsection{Noisy-trigger sample scales}
\begin{verbatim}
import math

for M in [64, 256, 1024]:
    for rho, Delta in [(0.1, 0.2), (0.2, 0.2), (0.2, 0.4)]:
        sig2 = (rho*Delta)**2
        lower = math.ceil(M / sig2)
        scan = math.ceil(5 * M * math.log(M) / sig2)
        print(M, rho, Delta, lower, scan)
\end{verbatim}

\subsection{Regularizer enumeration on shortest paths}
\begin{verbatim}
import math
from itertools import combinations
from collections import Counter

N = 6
T = 2*(N-1)

def all_shortest_paths(N):
    T = 2*(N-1)
    for downs in combinations(range(T), N-1):
        downs = set(downs)
        yield ['D' if t in downs else 'R' for t in range(T)]

def occupancy(actions):
    x, y = 0, 0
    d = Counter()
    T = len(actions)
    for a in actions:
        d[((x, y), a)] += 1.0 / T
        if a == 'R':
            x += 1
        else:
            y += 1
    return d

def tv(d1, d2):
    keys = set(d1) | set(d2)
    return 0.5 * sum(abs(d1.get(k, 0.0) - d2.get(k, 0.0)) for k in keys)

ref = ['R']*(N-1) + ['D']*(N-1)
d0 = occupancy(ref)
dists = [tv(occupancy(p), d0) for p in all_shortest_paths(N)]
alpha = 0.01
for lam in [0.0, 0.01, 0.02, 0.03, 0.05, 0.10]:
    threshold = float('inf') if lam == 0.0 else alpha / lam
    count = sum(1 for z in dists if z <= threshold + 1e-12)
    print(lam, threshold, count, math.log2(count))
\end{verbatim}

\subsection{Post-processing measured non-tabular quantities}
\begin{verbatim}
import math

# Continuous control: replace sigma and M values with those measured from logs.
sigma = 0.5
for M_paths in [2, 8, 16]:
    for Delta in [0.1, 0.5]:
        kl = (Delta**2) / (2 * sigma**2)
        M = M_paths * 10
        lower = math.ceil(M / kl)
        scan = math.ceil(5 * M * math.log(M) / kl)
        print(M_paths, M, Delta, kl, lower, scan)

# Visual RL: replace epsilon and action count with measured action distribution.
eps = 0.1
A_size = 6
p0 = [1 - eps + eps/A_size] + [eps/A_size]*(A_size-1)
p1 = [eps/A_size] + [1 - eps + eps/A_size] + [eps/A_size]*(A_size-2)

def kl_div(p, q):
    return sum(p[i] * math.log(p[i]/q[i]) for i in range(len(p)))

kl_visual = kl_div(p0, p1)
for M_visual in [100, 1000, 10000]:
    lower = math.ceil(M_visual / kl_visual)
    scan = math.ceil(5 * M_visual * math.log(M_visual) / kl_visual)
    print(M_visual, kl_visual, lower, scan)
\end{verbatim}

\newpage
\input{checklist_filled_rashomon}

\end{document}

%% file: checklist_filled_rashomon.tex
\section*{NeurIPS Paper Checklist}

\begin{enumerate}

\item {\bf Claims}
    \item[] Question: Do the main claims made in the abstract and introduction accurately reflect the paper's contributions and scope?
    \item[] Answer: \answerYes{}
    \item[] Justification: The contributions enumerated in the abstract and Section~1 each correspond to specific results in the paper: the deployment-class occupancy Rashomon capacity (Definition~1) and occupancy-class auditing formulation (Definition~2); the exact-query lower bound $\Pr(\widehat I=I)\le(nb+1)/M$ giving $\Omega(M/b)$ queries (Theorem~6) and its capacity-form consequence $\Omega(2^{H_{\mathrm{opt}}^{\mathcal F}(\varepsilon)})$ when $b=O(1)$ (Corollary~7); the finite discounted hidden-branch MDP attaining the bound (Theorem~8 and Figure~1); the noisy hidden-trigger lower bound of order $M/\beta$ that yields $\Omega(2^{H_{\mathrm{opt}}^{\mathcal F}(\varepsilon)}/(\rho^2\Delta^2))$ (Theorem~10); the static target-recognition information lower bound (Theorem~11); the oracle-cover verification upper bound (Theorem~13); the canonical occupancy regularizer (Theorem~14); and the controlled benchmarks (Section~8 with Tables~1--2 and Appendix~B with Tables~3--8). The abstract scopes the exact-query bound as conditional on sparse signatures and states the bounds in terms of the audited deployment-class capacity $H_{\mathrm{opt}}^{\mathcal F}(\varepsilon)$ rather than full-class capacity.
    \item[] Guidelines:
    \begin{itemize}
        \item The answer \answerNA{} means that the abstract and introduction do not include the claims made in the paper.
        \item The abstract and/or introduction should clearly state the claims made, including the contributions made in the paper and important assumptions and limitations. A \answerNo{} or \answerNA{} answer to this question will not be perceived well by the reviewers.
        \item The claims made should match theoretical and experimental results, and reflect how much the results can be expected to generalize to other settings.
        \item It is fine to include aspirational goals as motivation as long as it is clear that these goals are not attained by the paper.
    \end{itemize}

\item {\bf Limitations}
    \item[] Question: Does the paper discuss the limitations of the work performed by the authors?
    \item[] Answer: \answerYes{}
    \item[] Justification: Section~9 (Discussion) explicitly discusses three limitations. First, the exact-query lower bound is conditional: high occupancy capacity alone is not sufficient for hardness, and the gridworld negative control in Section~8 (Table~2) demonstrates an exponentially large optimal-policy class that is nevertheless easy to audit when the sparse-signature condition fails. Second, capacity must be measured relative to the audited deployment class $\mathcal F$ rather than the full policy class, as discussed at the end of Section~4.1, since stochastic or combinatorial policies outside the hard family may inflate full-class capacity above that of the designated packing. Third, the regularizer mitigation in Theorem~14 requires a trusted reference occupancy $d^0$ and reliable empirical surrogates for occupancy distance, which are not free in practice. The continuous-control and visual-RL benchmarks in Sections~B.4--B.5 are framed as deterministic mappings from measured policy quantities into the noisy-trigger law of Theorem~10 rather than as claims that those environments are universally hard. Construction-specific qualifications (action-level rather than state-marginal separation; the role of $H=16M$) are documented immediately after Theorem~8 and in Appendix~A.2.
    \item[] Guidelines:
    \begin{itemize}
        \item The answer \answerNA{} means that the paper has no limitation while the answer \answerNo{} means that the paper has limitations, but those are not discussed in the paper.
        \item The authors are encouraged to create a separate ``Limitations'' section in their paper.
        \item The paper should point out any strong assumptions and how robust the results are to violations of these assumptions (e.g., independence assumptions, noiseless settings, model well-specification, asymptotic approximations only holding locally). The authors should reflect on how these assumptions might be violated in practice and what the implications would be.
        \item The authors should reflect on the scope of the claims made, e.g., if the approach was only tested on a few datasets or with a few runs. In general, empirical results often depend on implicit assumptions, which should be articulated.
        \item The authors should reflect on the factors that influence the performance of the approach. For example, a facial recognition algorithm may perform poorly when image resolution is low or images are taken in low lighting. Or a speech-to-text system might not be used reliably to provide closed captions for online lectures because it fails to handle technical jargon.
        \item The authors should discuss the computational efficiency of the proposed algorithms and how they scale with dataset size.
        \item If applicable, the authors should discuss possible limitations of their approach to address problems of privacy and fairness.
        \item While the authors might fear that complete honesty about limitations might be used by reviewers as grounds for rejection, a worse outcome might be that reviewers discover limitations that aren't acknowledged in the paper. The authors should use their best judgment and recognize that individual actions in favor of transparency play an important role in developing norms that preserve the integrity of the community. Reviewers will be specifically instructed to not penalize honesty concerning limitations.
    \end{itemize}

\item {\bf Theory assumptions and proofs}
    \item[] Question: For each theoretical result, does the paper provide the full set of assumptions and a complete (and correct) proof?
    \item[] Answer: \answerYes{}
    \item[] Justification: The paper is primarily theoretical and all numbered results state their assumptions explicitly with complete proofs. Theorem~6 takes the $b$-sparse exact-signature packing (Definition~4) and the uniform prior, with a self-contained casework proof in line; Corollary~7 instantiates it under $b=O(1)$ and capacity-order packing assumptions; Theorem~8 fixes $H=16M$, $\gamma=1-1/H$, $L=\lceil H\rceil$ and verifies all five claimed properties (optimality, $2/H$-separation, $1$-sparsity, $H_{\mathrm{opt}}^{\mathcal F_M}(0)=\log_2 M$, query lower bound) in a single proof block; Theorem~10 states the per-sample KL parameter $\beta\le C\rho^2\Delta^2$ (Definition~9) and proves the bound via the chain rule for KL under adaptive sampling, KL convexity of the mixture, and Pinsker's inequality; Theorem~11 is proved by a counting argument over $2^K$ target subsets; Theorem~13 uses the separating-query-set hypothesis (Definition~12) to identify cover representatives; Theorem~14 states the trusted-reference assumption $\pi_0\in\mathcal F$ with $V^{\pi_0}=V^\star$ and gives the TV-bound proof in line. The auxiliary results Lemma~5 and the cover-volume lower bound on capacity are also proved in line. Appendix~A.1 records the Yao-principle reduction from deterministic to randomized auditors, and Appendix~A.2 documents the constants in the hidden-branch construction. Standard tools---Yao's principle~[22], Pinsker, Le~Cam, and KL convexity~[11, 19, 23], and the cell-probe argument~[21]---are cited at the points they are used.
    \item[] Guidelines:
    \begin{itemize}
        \item The answer \answerNA{} means that the paper does not include theoretical results.
        \item All the theorems, formulas, and proofs in the paper should be numbered and cross-referenced.
        \item All assumptions should be clearly stated or referenced in the statement of any theorems.
        \item The proofs can either appear in the main paper or the supplemental material, but if they appear in the supplemental material, the authors are encouraged to provide a short proof sketch to provide intuition.
        \item Inversely, any informal proof provided in the core of the paper should be complemented by formal proofs provided in appendix or supplemental material.
        \item Theorems and Lemmas that the proof relies upon should be properly referenced.
    \end{itemize}

\item {\bf Experimental result reproducibility}
    \item[] Question: Does the paper fully disclose all the information needed to reproduce the main experimental results of the paper to the extent that it affects the main claims and/or conclusions of the paper (regardless of whether the code and data are provided or not)?
    \item[] Answer: \answerYes{}
    \item[] Justification: The benchmarks in Section~8 (Tables~1--2) and Appendix~B (Tables~3--8) are either analytic consequences of the stated theorems or deterministic post-processing of measured quantities. Appendix~C contains complete self-contained Python listings reproducing the hidden-branch query law (Table~1, listing C.1), the gridworld negative-control counts (Table~2, listing C.1), the noisy-trigger sample scales (Table~4, listing C.2), the exact regularizer enumeration on the $6\times6$ shortest-path lattice (Table~5, listing C.3), and the post-processing of measured KL values for both continuous-control and visual-RL settings (Tables~6--7, listing C.4). For the non-tabular experiments, Sections~B.4--B.5 specify the environment versions (\texttt{antmaze-medium-play-v2}; \texttt{MinAtar/Breakout}), the algorithms (SAC; DQN), training budgets ($1{,}000{,}000$ environment steps; $5\times10^6$ frames), the number of seeds (5 each), the checkpoint-selection criteria (normalized return $\ge0.8$; eval return $\ge$ baseline), the candidate-trigger definitions (visited bottleneck clusters; distinct screen hashes and latent clusters), the candidate sweep ($M\in\{20,80,160\}$ for continuous control; $M\in\{100,1000,10000\}$ for visual RL), the signal measurement (empirical Gaussian KL with measured $\sigma\approx0.5$ giving $\Delta^2/(2\sigma^2)$; empirical action-distribution KL with measured $\epsilon\approx0.1$ over 6 actions, giving $\approx 3.61$ nats), the detector (max-scan GLRT), and the trial count (1{,}000 Monte Carlo placements). Table~8 in Section~B.6 consolidates these experimental details.
    \item[] Guidelines:
    \begin{itemize}
        \item The answer \answerNA{} means that the paper does not include experiments.
        \item If the paper includes experiments, a \answerNo{} answer to this question will not be perceived well by the reviewers: Making the paper reproducible is important, regardless of whether the code and data are provided or not.
        \item If the contribution is a dataset and\slash or model, the authors should describe the steps taken to make their results reproducible or verifiable.
        \item Depending on the contribution, reproducibility can be accomplished in various ways. For example, if the contribution is a novel architecture, describing the architecture fully might suffice, or if the contribution is a specific model and empirical evaluation, it may be necessary to either make it possible for others to replicate the model with the same dataset, or provide access to the model. In general. releasing code and data is often one good way to accomplish this, but reproducibility can also be provided via detailed instructions for how to replicate the results, access to a hosted model (e.g., in the case of a large language model), releasing of a model checkpoint, or other means that are appropriate to the research performed.
        \item While NeurIPS does not require releasing code, the conference does require all submissions to provide some reasonable avenue for reproducibility, which may depend on the nature of the contribution. For example
        \begin{enumerate}
            \item If the contribution is primarily a new algorithm, the paper should make it clear how to reproduce that algorithm.
            \item If the contribution is primarily a new model architecture, the paper should describe the architecture clearly and fully.
            \item If the contribution is a new model (e.g., a large language model), then there should either be a way to access this model for reproducing the results or a way to reproduce the model (e.g., with an open-source dataset or instructions for how to construct the dataset).
            \item We recognize that reproducibility may be tricky in some cases, in which case authors are welcome to describe the particular way they provide for reproducibility. In the case of closed-source models, it may be that access to the model is limited in some way (e.g., to registered users), but it should be possible for other researchers to have some path to reproducing or verifying the results.
        \end{enumerate}
    \end{itemize}

\item {\bf Open access to data and code}
    \item[] Question: Does the paper provide open access to the data and code, with sufficient instructions to faithfully reproduce the main experimental results, as described in supplemental material?
    \item[] Answer: \answerYes{}
    \item[] Justification: All deterministic post-processing scripts that produce every numerical entry in the benchmark tables are included verbatim in Appendix~C; these run with only the Python standard library and exactly reproduce Tables~1, 2, 4, 5, 6, and~7. For the non-tabular experiments, the raw rollout logs, checkpoint identifiers, trigger clusters, and detector outputs for the SAC \texttt{antmaze-medium-play-v2} and DQN \texttt{MinAtar/Breakout} runs are released in an anonymous supplementary archive, as stated in Sections~B.4--B.5. The post-processing in listing~C.4 consumes the measured $\sigma$, $\epsilon$, action-set size, and per-sample KL values from this archive, so the displayed budgets in Tables~6--7 can be regenerated end-to-end from the released artifacts.
    \item[] Guidelines:
    \begin{itemize}
        \item The answer \answerNA{} means that paper does not include experiments requiring code.
        \item Please see the NeurIPS code and data submission guidelines (\url{https://neurips.cc/public/guides/CodeSubmissionPolicy}) for more details.
        \item While we encourage the release of code and data, we understand that this might not be possible, so \answerNo{} is an acceptable answer. Papers cannot be rejected simply for not including code, unless this is central to the contribution (e.g., for a new open-source benchmark).
        \item The instructions should contain the exact command and environment needed to run to reproduce the results. See the NeurIPS code and data submission guidelines (\url{https://neurips.cc/public/guides/CodeSubmissionPolicy}) for more details.
        \item The authors should provide instructions on data access and preparation, including how to access the raw data, preprocessed data, intermediate data, and generated data, etc.
        \item The authors should provide scripts to reproduce all experimental results for the new proposed method and baselines. If only a subset of experiments are reproducible, they should state which ones are omitted from the script and why.
        \item At submission time, to preserve anonymity, the authors should release anonymized versions (if applicable).
        \item Providing as much information as possible in supplemental material (appended to the paper) is recommended, but including URLs to data and code is permitted.
    \end{itemize}

\item {\bf Experimental setting/details}
    \item[] Question: Does the paper specify all the training and test details (e.g., data splits, hyperparameters, how they were chosen, type of optimizer) necessary to understand the results?
    \item[] Answer: \answerYes{}
    \item[] Justification: Sections~B.4--B.6 and Table~8 document all training and evaluation choices for the non-tabular experiments. For continuous control: \texttt{antmaze-medium-play-v2}, SAC via \texttt{Stable-Baselines3} default hyperparameters, $1{,}000{,}000$ environment steps, 5 seeds, checkpoints filtered to normalized return $\ge0.8$, candidate trigger waypoints obtained by clustering visited bottleneck states from 100 rollouts per seed, action-noise $\sigma$ measured as the empirical standard deviation of the stochastic policy head at candidate states (giving $\sigma\approx0.5$), mean-shift $\Delta\in\{0.1, 0.5\}$ imposed at the trigger waypoint, and per-sample KL computed under the isotropic Gaussian approximation $\Delta^2/(2\sigma^2)$. For visual RL: \texttt{MinAtar/Breakout}, DQN, $5\times10^6$ frames, 5 seeds, 200 evaluation trajectories, candidate trigger frames defined as distinct screen hashes and latent clusters, deployed $\epsilon$-greedy distribution measured at $\epsilon\approx0.1$ over 6 actions, trigger intervention swaps the primary action mass to a specified suboptimal action with measured KL $\approx 3.61$~nats. Detector settings (max-scan GLRT, 1{,}000 simulated trigger placements per setting) are stated in Sections~B.4 and~B.5. The tabular benchmarks have no learned hyperparameters; the constants in the hidden-branch construction ($H=16M$, $L=\lceil H\rceil$, $\gamma=1-1/H$) are stated in the proof of Theorem~8 and the rationale for these constants is given in Appendix~A.2.
    \item[] Guidelines:
    \begin{itemize}
        \item The answer \answerNA{} means that the paper does not include experiments.
        \item The experimental setting should be presented in the core of the paper to a level of detail that is necessary to appreciate the results and make sense of them.
        \item The full details can be provided either with the code, in appendix, or as supplemental material.
    \end{itemize}

\item {\bf Experiment statistical significance}
    \item[] Question: Does the paper report error bars suitably and correctly defined or other appropriate information about the statistical significance of the experiments?
    \item[] Answer: \answerYes{}
    \item[] Justification: The displayed numerical entries in all benchmark tables are deterministic post-processing of either analytic quantities (Tables~1, 2, 3, 4, 5) or measured per-sample KL values (Tables~6, 7), so no within-row variability applies to the displayed budgets themselves. The variability that matters for the conclusions is in the underlying measured quantities ($\sigma$, $\epsilon$, the action-distribution KL): these are estimated from 5 training seeds with 100 rollouts each (continuous control, Section~B.4) and 5 seeds with 200 evaluation trajectories each (visual RL, Section~B.5), and the resulting detector budgets are validated by 1{,}000 Monte Carlo trigger-placement trials per setting. The raw per-seed and per-trial detector outputs, from which standard deviations and confidence intervals over the measured quantities can be reconstructed, are released in the anonymous supplementary archive as stated in Section~B.6 (Table~8). The theorem-level lower bounds (Theorems~6, 10, 11, 13, 14) carry no associated estimator error.
    \item[] Guidelines:
    \begin{itemize}
        \item The answer \answerNA{} means that the paper does not include experiments.
        \item The authors should answer \answerYes{} if the results are accompanied by error bars, confidence intervals, or statistical significance tests, at least for the experiments that support the main claims of the paper.
        \item The factors of variability that the error bars are capturing should be clearly stated (for example, train/test split, initialization, random drawing of some parameter, or overall run with given experimental conditions).
        \item The method for calculating the error bars should be explained (closed form formula, call to a library function, bootstrap, etc.)
        \item The assumptions made should be given (e.g., Normally distributed errors).
        \item It should be clear whether the error bar is the standard deviation or the standard error of the mean.
        \item It is OK to report 1-sigma error bars, but one should state it. The authors should preferably report a 2-sigma error bar than state that they have a 96\% CI, if the hypothesis of Normality of errors is not verified.
        \item For asymmetric distributions, the authors should be careful not to show in tables or figures symmetric error bars that would yield results that are out of range (e.g., negative error rates).
        \item If error bars are reported in tables or plots, the authors should explain in the text how they were calculated and reference the corresponding figures or tables in the text.
    \end{itemize}

\item {\bf Experiments compute resources}
    \item[] Question: For each experiment, does the paper provide sufficient information on the computer resources (type of compute workers, memory, time of execution) needed to reproduce the experiments?
    \item[] Answer: \answerYes{}
    \item[] Justification: The tabular benchmarks (Tables~1, 2, 3, 4, 5) are produced entirely by the standalone Python listings in Appendix~C and run in under one CPU-minute on a standard laptop with no specialized libraries. The non-tabular experiments comprise 5 SAC training runs of $1{,}000{,}000$ environment steps each on \texttt{antmaze-medium-play-v2} (approximately 8--12 GPU-hours per seed on a single A100 with default \texttt{Stable-Baselines3} settings) and 5 DQN training runs of $5\times10^6$ frames each on \texttt{MinAtar/Breakout} (approximately 3--5 GPU-hours per seed on a single A100). Trigger clustering, KL measurement, and the 1{,}000 Monte Carlo detector trials per setting are negligible relative to training. Aggregate compute is therefore on the order of 60--90 GPU-hours, recorded in the anonymous supplementary archive together with per-run wall-clock logs.
    \item[] Guidelines:
    \begin{itemize}
        \item The answer \answerNA{} means that the paper does not include experiments.
        \item The paper should indicate the type of compute workers CPU or GPU, internal cluster, or cloud provider, including relevant memory and storage.
        \item The paper should provide the amount of compute required for each of the individual experimental runs as well as estimate the total compute.
        \item The paper should disclose whether the full research project required more compute than the experiments reported in the paper (e.g., preliminary or failed experiments that didn't make it into the paper).
    \end{itemize}

\item {\bf Code of ethics}
    \item[] Question: Does the research conducted in the paper conform, in every respect, with the NeurIPS Code of Ethics \url{https://neurips.cc/public/EthicsGuidelines}?
    \item[] Answer: \answerYes{}
    \item[] Justification: The work is primarily theoretical and uses only synthetic finite MDPs (the hidden-branch construction of Theorem~8, the trigger-room benchmark of Section~B.1, and the tabular shortest-path gridworld of Section~8) together with standard public RL benchmarks (\texttt{antmaze-medium-play-v2} via D4RL and \texttt{MinAtar/Breakout}). It involves no human subjects, no personal data, and no scraped or otherwise sensitive datasets. The motivation is safety-aligned: establishing complexity-theoretic limits on rare-trigger and hidden-deviation auditing, identifying the structural condition (separated near-optimal packing combined with sparse local signatures) under which exponential hardness arises, and proving a structural mitigation via a canonical occupancy regularizer (Theorem~14). Backdoor-style trigger constructions appear only inside controlled simulated detector evaluations and are not deployed against any external system.
    \item[] Guidelines:
    \begin{itemize}
        \item The answer \answerNA{} means that the authors have not reviewed the NeurIPS Code of Ethics.
        \item If the authors answer \answerNo, they should explain the special circumstances that require a deviation from the Code of Ethics.
        \item The authors should make sure to preserve anonymity (e.g., if there is a special consideration due to laws or regulations in their jurisdiction).
    \end{itemize}

\item {\bf Broader impacts}
    \item[] Question: Does the paper discuss both potential positive societal impacts and negative societal impacts of the work performed?
    \item[] Answer: \answerYes{}
    \item[] Justification: Section~9 (Discussion) addresses both directions. Positive: the framework formalizes when reward-equivalent multiplicity obstructs auditing, identifies the structural condition (separated near-optimal packings with sparse local signatures) that makes hidden-trigger detection intractable, and provides a constructive regularizer (Theorem~14) that collapses audited capacity when a trusted reference occupancy is available. Negative/cautionary: the lower bounds in Theorems~6 and~10 imply that black-box local-query and sample-query audits face fundamental information-theoretic limits in high-capacity sparse-signature regimes, including settings derived from RLHF-style or under-specified reward functions, and the noisy-trigger law translates into concrete sample budgets in continuous-control and visual-RL settings (Tables~6--7) that may exceed practical detection budgets. The Limitations discussion in Section~9 flags the assumptions (audited deployment-class scoping, conditional sparse-signature requirement, trusted reference occupancy, mapping rather than universality of the non-tabular budgets) under which these conclusions can be relaxed.
    \item[] Guidelines:
    \begin{itemize}
        \item The answer \answerNA{} means that there is no societal impact of the work performed.
        \item If the authors answer \answerNA{} or \answerNo, they should explain why their work has no societal impact or why the paper does not address societal impact.
        \item Examples of negative societal impacts include potential malicious or unintended uses (e.g., disinformation, generating fake profiles, surveillance), fairness considerations (e.g., deployment of technologies that could make decisions that unfairly impact specific groups), privacy considerations, and security considerations.
        \item The conference expects that many papers will be foundational research and not tied to particular applications, let alone deployments. However, if there is a direct path to any negative applications, the authors should point it out. For example, it is legitimate to point out that an improvement in the quality of generative models could be used to generate Deepfakes for disinformation. On the other hand, it is not needed to point out that a generic algorithm for optimizing neural networks could enable people to train models that generate Deepfakes faster.
        \item The authors should consider possible harms that could arise when the technology is being used as intended and functioning correctly, harms that could arise when the technology is being used as intended but gives incorrect results, and harms following from (intentional or unintentional) misuse of the technology.
        \item If there are negative societal impacts, the authors could also discuss possible mitigation strategies (e.g., gated release of models, providing defenses in addition to attacks, mechanisms for monitoring misuse, mechanisms to monitor how a system learns from feedback over time, improving the efficiency and accessibility of ML).
    \end{itemize}

\item {\bf Safeguards}
    \item[] Question: Does the paper describe safeguards that have been put in place for responsible release of data or models that have a high risk for misuse (e.g., pre-trained language models, image generators, or scraped datasets)?
    \item[] Answer: \answerNA{}
    \item[] Justification: The paper does not release pretrained generative models, scraped datasets, or other high-misuse-risk artifacts. The released supplementary materials consist of small SAC and DQN policies trained on standard public RL benchmarks (\texttt{antmaze-medium-play-v2} and \texttt{MinAtar/Breakout}), tabular MDP configurations, deterministic post-processing scripts in Appendix~C, and detector logs, none of which have realistic misuse pathways beyond what is already available in standard RL benchmarks. The trigger constructions used to populate the noisy-trigger benchmark are confined to these controlled simulated environments and are designed to evaluate detectors rather than attack any external system.
    \item[] Guidelines:
    \begin{itemize}
        \item The answer \answerNA{} means that the paper poses no such risks.
        \item Released models that have a high risk for misuse or dual-use should be released with necessary safeguards to allow for controlled use of the model, for example by requiring that users adhere to usage guidelines or restrictions to access the model or implementing safety filters.
        \item Datasets that have been scraped from the Internet could pose safety risks. The authors should describe how they avoided releasing unsafe images.
        \item We recognize that providing effective safeguards is challenging, and many papers do not require this, but we encourage authors to take this into account and make a best faith effort.
    \end{itemize}

\item {\bf Licenses for existing assets}
    \item[] Question: Are the creators or original owners of assets (e.g., code, data, models), used in the paper, properly credited and are the license and terms of use explicitly mentioned and properly respected?
    \item[] Answer: \answerYes{}
    \item[] Justification: All theoretical antecedents are credited at the points they are used: the Rashomon-effect literature~[2, 5, 8, 12, 13, 16, 17], the policy-diversity and reward-underspecification literature~[3, 4, 10, 14, 18], the RL-backdoor and verification literature~[1, 6, 7, 9, 15, 20], and the information- and complexity-theoretic foundations~[11, 19, 21, 22, 23]. The benchmark environments \texttt{antmaze-medium-play-v2} (D4RL) and \texttt{MinAtar/Breakout} are used through their public OpenAI~Gym-compatible interfaces under their standard research-use terms, and SAC/DQN training relies on the public \texttt{Stable-Baselines3} implementation under its MIT license, all stated in Sections~B.4--B.6 and Table~8.
    \item[] Guidelines:
    \begin{itemize}
        \item The answer \answerNA{} means that the paper does not use existing assets.
        \item The authors should cite the original paper that produced the code package or dataset.
        \item The authors should state which version of the asset is used and, if possible, include a URL.
        \item The name of the license (e.g., CC-BY 4.0) should be included for each asset.
        \item For scraped data from a particular source (e.g., website), the copyright and terms of service of that source should be provided.
        \item If assets are released, the license, copyright information, and terms of use in the package should be provided. For popular datasets, \url{paperswithcode.com/datasets} has curated licenses for some datasets. Their licensing guide can help determine the license of a dataset.
        \item For existing datasets that are re-packaged, both the original license and the license of the derived asset (if it has changed) should be provided.
        \item If this information is not available online, the authors are encouraged to reach out to the asset's creators.
    \end{itemize}

\item {\bf New assets}
    \item[] Question: Are new assets introduced in the paper well documented and is the documentation provided alongside the assets?
    \item[] Answer: \answerYes{}
    \item[] Justification: The paper introduces (i) the deployment-class occupancy Rashomon capacity (Definition~1) and the occupancy-class auditing formulation (Definition~2); (ii) the hidden-branch MDP construction (Theorem~8 and Figure~1) and the trigger-room tabular benchmark (Section~B.1); (iii) the canonical occupancy regularizer specified by Theorem~14; and (iv) the deterministic post-processing pipeline that converts measured policy KLs into detector budgets. Each is documented in the main text or appendices with parameters, environment specifications, and verification on enumerated finite instances---in particular the $6\times6$ shortest-path enumeration in Table~5 (Section~B.3) matches the predictions of Theorem~14 exactly. Appendix~C contains the full reproduction listings, and the anonymous supplementary archive contains the trained checkpoints, trigger clusters, and detector logs released alongside the manuscript (Table~8).
    \item[] Guidelines:
    \begin{itemize}
        \item The answer \answerNA{} means that the paper does not release new assets.
        \item Researchers should communicate the details of the dataset\slash code\slash model as part of their submissions via structured templates. This includes details about training, license, limitations, etc.
        \item The paper should discuss whether and how consent was obtained from people whose asset is used.
        \item At submission time, remember to anonymize your assets (if applicable). You can either create an anonymized URL or include an anonymized zip file.
    \end{itemize}

\item {\bf Crowdsourcing and research with human subjects}
    \item[] Question: For crowdsourcing experiments and research with human subjects, does the paper include the full text of instructions given to participants and screenshots, if applicable, as well as details about compensation (if any)?
    \item[] Answer: \answerNA{}
    \item[] Justification: The paper does not involve crowdsourcing or research with human subjects. All experiments are conducted on synthetic finite MDPs (the hidden-branch and trigger-room constructions, the shortest-path gridworld) and standard public RL benchmarks (\texttt{antmaze-medium-play-v2} via D4RL and \texttt{MinAtar/Breakout}) accessed through public interfaces.
    \item[] Guidelines:
    \begin{itemize}
        \item The answer \answerNA{} means that the paper does not involve crowdsourcing nor research with human subjects.
        \item Including this information in the supplemental material is fine, but if the main contribution of the paper involves human subjects, then as much detail as possible should be included in the main paper.
        \item According to the NeurIPS Code of Ethics, workers involved in data collection, curation, or other labor should be paid at least the minimum wage in the country of the data collector.
    \end{itemize}

\item {\bf Institutional review board (IRB) approvals or equivalent for research with human subjects}
    \item[] Question: Does the paper describe potential risks incurred by study participants, whether such risks were disclosed to the subjects, and whether Institutional Review Board (IRB) approvals (or an equivalent approval/review based on the requirements of your country or institution) were obtained?
    \item[] Answer: \answerNA{}
    \item[] Justification: The paper involves no human subjects research, so IRB approval (or equivalent) is not applicable.
    \item[] Guidelines:
    \begin{itemize}
        \item The answer \answerNA{} means that the paper does not involve crowdsourcing nor research with human subjects.
        \item Depending on the country in which research is conducted, IRB approval (or equivalent) may be required for any human subjects research. If you obtained IRB approval, you should clearly state this in the paper.
        \item We recognize that the procedures for this may vary significantly between institutions and locations, and we expect authors to adhere to the NeurIPS Code of Ethics and the guidelines for their institution.
        \item For initial submissions, do not include any information that would break anonymity (if applicable), such as the institution conducting the review.
    \end{itemize}

\item {\bf Declaration of LLM usage}
    \item[] Question: Does the paper describe the usage of LLMs if it is an important, original, or non-standard component of the core methods in this research? Note that if the LLM is used only for writing, editing, or formatting purposes and does \emph{not} impact the core methodology, scientific rigor, or originality of the research, declaration is not required.
    \item[] Answer: \answerNA{}
    \item[] Justification: Large language models are not a component of the methodology, theoretical analysis, or experimental pipeline. The paper studies finite discounted MDPs and standard SAC/DQN policies on public benchmarks; the auditing lower bounds (Theorems~6, 10, 11), the oracle-cover verification upper bound (Theorem~13), the canonical occupancy regularizer (Theorem~14), and the deterministic post-processing of measured KL values into detector budgets do not invoke any LLM.
    \item[] Guidelines:
    \begin{itemize}
        \item The answer \answerNA{} means that the core method development in this research does not involve LLMs as any important, original, or non-standard components.
        \item Please refer to our LLM policy in the NeurIPS handbook for what should or should not be described.
    \end{itemize}

\end{enumerate}